\newcommand{\email}[1]{\href{mailto:#1}{\nolinkurl{#1}}}
\newtheorem{theorem}{Theorem}[section]
\newtheorem{lemma}[theorem]{Lemma}
\newtheorem{proposition}[theorem]{Proposition}
\newtheorem{assumption}[theorem]{Assumption}
\theoremstyle{plain}{\theorembodyfont{\rmfamily}%
}
\theoremstyle{plain}{\theorembodyfont{\rmfamily}%
}
\theoremstyle{plain}{\theorembodyfont{\rmfamily}%
\newtheorem{remark}[theorem]{Remark}}
\theoremstyle{plain}{\theorembodyfont{\rmfamily}%
\theoremstyle{plain}{\theorembodyfont{\rmfamily}%
}
\theoremstyle{plain}{\theorembodyfont{\rmfamily}%
}
\theoremstyle{plain}{\theorembodyfont{\rmfamily}%
\newtheorem{definition}[theorem]{Definition}}
\theoremstyle{plain}{\theorembodyfont{\rmfamily}%
}

\numberwithin{equation}{section}

\newcommand{\scal}[2]{{\left\langle{{#1},{#2}}\right\rangle}}

\newcommand{\menge}[2]{\big\{{#1}~\big |~{#2}\big\}}

\newcommand{\YY}{\ensuremath{{\mathcal Y}}}

\newcommand{\epi}{\ensuremath{\text{\rm epi }}}

\newcommand{\RR}{\ensuremath{\mathbb{R}}}
\newcommand{\R}{\ensuremath{\mathbb{R}}}

\newcommand{\RPP}{\ensuremath{\left]0,+\infty\right[}}

\newcommand{\NN}{\ensuremath{\mathbb N}}

\newcommand{\exi}{\ensuremath{\exists\,}}
\newcommand{\pinf}{\ensuremath{{+\infty}}}

\newcommand{\Sum}{\ensuremath{\displaystyle\sum}}

\newcommand{\zeroun}{\ensuremath{\left]0,1\right[}}   
\newcommand{\rzeroun}{\ensuremath{\left]0,1\right]}}

\newcommand{\gra}{\ensuremath{\text{\rm graph}\,}}

\newcommand{\argmin}{\ensuremath{\text{\rm argmin}\,}}
\newcommand{\argmax}{\ensuremath{\text{\rm argmax}\,}}

\newenvironment{proofof}[1]
{\textbf{Proof of #1}
}
{
}

\title{A H\"olderian backtracking method for min-max and min-min problems}

%

\author{%
  Jérôme Bolte\thanks{Authors are listed in alphabetical order} \\
  Toulouse School of Economics\\
  University of Toulouse\\
  \texttt{jerome.bolte@ut-capitole.fr} \\
   \And
   Lilian Glaudin\footnote[1]{} \\
  ANITI \\
  University of Toulouse \\
   \texttt{lilian@glaudin.net} \\
  \AND
  Edouard Pauwels\footnote[1]{} \\
  IRIT \\
  Unversity of Toulouse \\
   \texttt{edouard.pauwels@irit.fr} \\
  \And
  Mathieu Serrurier\footnote[1]{} \\
  IRIT \\
  University of Toulouse \\
   \texttt{mathieu.serrurier@irit.fr} \\
}

\begin{document}

\maketitle

\begin{abstract}
We present a new algorithm to solve min-max or min-min problems out of
the convex world. We use rigidity assumptions, ubiquitous in learning,
making our method applicable to many optimization problems. Our approach
takes advantage of hidden regularity properties and allows us to devise
a simple algorithm of ridge type. An original feature of our method is
to come with automatic step size adaptation which departs from the usual
overly cautious backtracking methods. In a general framework, we provide
convergence theoretical guarantees and rates. We apply our findings on
simple GAN problems obtaining promising numerical results.
\end{abstract}

\section{Introduction}

Adversarial learning, introduced in \cite{Good14}, see also \cite{Arjo17}, calls for the development of algorithms  addressing large scale, smooth problems of the type
\begin{align}\label{basicprob}
\min_{x\in\R^d }\max_{y\in\YY} L(x,y), 
\end{align}
 where $\YY$ is a constraint set, and $L$ is a given cost function. 
 This structure happens to be ubiquitous in optimization and game theory, but generally under assumptions that are not those met in learning. In optimization it stems from the Lagrangian approach and duality theory, see e.g., \cite{Boyd04,bertsekas2014constrained,Comb17a}, while in game theory it comes from  zero-sum 2-players games, see e.g., \cite{neumann1928theorie,Von49,laraki2019mathematical}. 
 Dynamics for addressing \eqref{basicprob} have thus naturally two types. They may be built on strategic considerations, so that  algorithms correspond to a sequence of actions chosen by antagonistic players, see \cite{laraki2019mathematical} and references therein. In general these methods are not favorable to optimization because the contradictory interests of players induce oscillations and slowness in the identification of optimal strategies. Optimization algorithms seem more interesting for our purposes because they focus on the final result, i.e., finding an optimal choice $x$, regardless of the adversarial strategy issues. In that respect, there are two possibilities: the variational inequality approach which treat  minimization and maximization variables on an equal footing, see e.g. \cite{Korp76,Nemi04,Comb17a} or \cite{mertikopoulos2018optimistic,hsieh2019convergence,Gide18} in learning. On the other hand, some methods break this symmetry, as  primal  or augmented Lagrangian methods. In those, a large number of explicit steps, implicit steps, or global minimization  are performed on one   variable while the other is updated much more cautiously in an explicit incremental way, see e.g., \cite{bertsekas2014constrained,sabach2019lagrangian}.

Our work is written in this spirit: we assume that the under-max argument is tractable  with a good precision, and we construct our algorithm on the following model:
\begin{align*}\label{algo:intro}
& y_n = \argmax_{y\in\YY} L(x_n,y),\\
& x_{n+1}=x_n-\gamma_n\nabla_x L(x_n,y_n), \gamma_n>0, \,n\geq 0.
\end{align*}
As explained above, the rationale is not new\footnote{It can be traced back to the origin of augmented Lagrangian methods, see e.g.,  \cite{rockafellar1981proximal}}, and is akin to many methods in the literature on learning where the global optimization is performed approximately by multiple gradient steps \cite{Noui19,Noui19a}  or by clever revision steps, as in the "follow the ridge" method, see \cite{Wang19}.

\paragraph{Backtrack H\"older} What is new then?  The surprising fact is that we can provide theoretical grounds to devise {\em large steps} and thus obtain aggressive learning rates  with few assumptions. This is done by exploiting  some hidden properties of the value function $g=\max_y L(\cdot,y)$ under widespread rigidity assumptions. Let us sketch the ideas of our approach. First, under a uniqueness assumption on the maximizer, our method appears to be a gradient method on the value function for ``player I" ( ``the generator" of GANs) 
\[x_{n+1}=x_n-\gamma_n\nabla g(x_n).\]
Secondly, we use the fact that $g$ has a locally H\"olderian gradient\footnote{Recall that $G \colon \RR^d \mapsto \RR^{d'}$ is locally H\"olderian if for all bounded subset $V \subset \RR^d$, there exists $\beta$ and $\nu$ positive such that $\|G(x) - G(y)\| \leq \beta \|x - y\|^\nu$, whenever $x,y \in V$.} whenever $L$ is semi-algebraic or analytic-like, a situation which covers most of the problems met in practice. 
With such observations, we may then develop  automatic learning rate strategies and a diagonal backtracking method, that we call ``Backtrack H\"older methods for min-max".

\paragraph{Contributions} \begin{itemize}

\item We provide  new algorithms whose steps are tuned automatically: Backtrack H\"older gradient and Backtrack H\"older for min-max methods.

\item Our algorithms are shown  to perform with  $O(\epsilon^{-[2+c]})$ rate, where $c$ is a cost incurred by the diagonal backtracking process (which is  negligible in practice), and to provide  general convergence  guarantees to points $(x^*, y^*)$ satisfying $y^*=\argmax_y L(x^*,y)$ and $\nabla_x L(x^*,y^*) = 0$. This is done within a fairly general framework, since  $L$ is merely assumed  semi-algebraic while  the ``best response" of player II is only required to be singled-valued. 

\item A byproduct of our work is a global  convergence result for H\"older methods, which were earlier investigated in the literature \cite{Berg20,Grap19,Nest15,Yash16}.

\item Our work is theoretical in essence. It is merely a first step towards more involved research, regarding the effect of nonsmoothness or stochastic subsampling. We propose however numerical experiments on learning problems. First on the ``Sinkhorn GANs", \cite{pmlr-v84-genevay18a,genevay2017gan}, which rely on optimal transport losses regularized through the addition of an entropic term, and second on Wasserstein GANs \cite{Arjo17} which are a natural extensions of GANs \cite{Good14}.
\end{itemize}

\section{Gradient algorithms \textit{\`a la} H\"older}
\label{sec:holder}
Our results often use semi-algebraic assumptions which are pervasive in optimization and machine learning, see e.g. \cite{castera2019inertial} and references therein.

Our method and proofs  are presented  in view of solving min-max problems,  but the techniques are identical for the min-min case. $\R^d,\R^{d'}$ are endowed with their Euclidean structure.

\subsection{Framework: a single valued best response and semi-algebraicity}
 \label{sec:singleBestRep}
Let $\YY\subset\RR^{d'}$ be a 
 nonempty closed semi-algebraic set, see Definition~\ref{def:sa} in Appendix. 

\paragraph{Properties of the value function and its best response}
\begin{assumption}[Standing assumptions]
\label{a:LC1}
 $L$ is a $C^1$ semi-algebraic function on $\RR^d\times\RR^{d'}$ such that
$(x,y)\mapsto \nabla_x L(x,y)$ is jointly continuous. Furthermore, for any compact sets $K_1\subset\RR^d$ and  $K_2\subset\RR^{d'}$, there exist $\beta_1,\beta_2\in\RPP$ such that, $\forall x_1,x_2\in K_1,\, \forall y_1,y_2\in K_2,$
\begin{equation}
\|\nabla_x L(x_1,y_1)-\nabla_x L(x_2,y_2)\|\leq
\beta_1\|x_1-x_2\|+\beta_2\|y_1-y_2\|.
\end{equation}
\end{assumption}
Borrowing the terminology from game theory, one defines the {\em value function} as  $g(\cdot)=\max_{y\in\YY} L(\cdot,y)$ and
the {\em best response mapping} $p(\cdot)=\argmax_{y\in\YY} L(\cdot,y)$ for $x\in\R^d$. 
\begin{assumption}[Well posedness]\label{ass:psing} 
H1. $p(x)$ is nonempty and single valued for every $x\in\RR^d$, \\
H2. $p$ is continuous.
\end{assumption}
The first part of the assumption is satisfied whenever $L(x,\cdot)$ is strictly concave, see e.g. \cite{Lin20}. Note also that if $L(x,\cdot)$ is concave, as in a dual  optimal transport formulation, some regularization techniques can be used to obtain uniqueness and preserve semi-algebraicity, see e.g., \cite{cuturi2013sinkhorn}.
As for the {\em H2} continuity assumption, it is much less stringent than it may look:
\begin{proposition}[Continuity of $p$]
    \label{p:gdiff0}
    Suppose that Assumption~\ref{a:LC1} and Assumption~\ref{ass:psing}-H1 are satisfied, and that either $\YY$ is compact, or  $p$ is bounded on bounded sets.
    Then the best response $p$ is a continuous function, that is Assumption~\ref{ass:psing}-H2 is fulfilled.
 \end{proposition}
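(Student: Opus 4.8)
The plan is to establish continuity of $p$ sequentially: fix $x\in\RR^d$ and an arbitrary sequence $x_n\to x$, and show that $p(x_n)\to p(x)$. The core mechanism is that the argmax correspondence $x\mapsto\argmax_{y\in\YY}L(x,y)$ enjoys an upper/outer semicontinuity property (of Berge type), which, combined with the single-valuedness furnished by Assumption~\ref{ass:psing}-H1, upgrades to genuine continuity. First I would secure compactness of the relevant tail: under either of the two alternative hypotheses the set $\{p(x_n):n\in\NN\}$ is bounded. Indeed, if $\YY$ is compact this is immediate since $p(x_n)\in\YY$; otherwise the sequence $(x_n)$ is bounded (being convergent), and the assumption that $p$ is bounded on bounded sets does the job.

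Next, to prove $p(x_n)\to p(x)$ I would invoke the standard subsequence principle: it suffices to show that every convergent subsequence of $(p(x_n))$ has limit $p(x)$. So extract any subsequence $(p(x_{n_k}))$ converging to some $\bar y$. Since $\YY$ is closed and $p(x_{n_k})\in\YY$, the limit satisfies $\bar y\in\YY$. The defining optimality of the best response gives, for every $y\in\YY$, the inequality $L(x_{n_k},p(x_{n_k}))\geq L(x_{n_k},y)$. Letting $k\to\infty$ and using the joint continuity of $L$ (which is $C^1$ by Assumption~\ref{a:LC1}, hence continuous) on both sides yields $L(x,\bar y)\geq L(x,y)$ for all $y\in\YY$. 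Thus $\bar y$ is a maximizer of $L(x,\cdot)$ over $\YY$, i.e. $\bar y\in p(x)$, and single-valuedness forces $\bar y=p(x)$.

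Since the bounded sequence $(p(x_n))$ has all its cluster points equal to the single value $p(x)$, it converges to $p(x)$; as $x$ and the sequence were arbitrary, $p$ is continuous. The only delicate point, and the main obstacle, is the passage to the limit in the optimality inequality, which genuinely requires two ingredients: the boundedness guaranteeing that cluster points exist, and the closedness of $\YY$ guaranteeing that $\bar y$ remains admissible. Single-valuedness is what prevents the argument from stalling at mere upper semicontinuity, where distinct subsequential limits could a priori coexist; it is precisely the hypothesis that converts compactness together with the closed graph of the solution map into pointwise convergence.
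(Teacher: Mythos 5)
Your proof is correct and follows essentially the same route as the paper's: a sequential argument extracting a cluster point of $(p(x_n))$ (via compactness of $\YY$ or boundedness of $p$ on bounded sets), passing to the limit in the optimality inequality $L(x_{n_k},p(x_{n_k}))\geq L(x_{n_k},y)$, and invoking single-valuedness to identify the limit with $p(x)$. If anything, your write-up is cleaner: you treat both hypotheses uniformly and state the optimality inequality correctly (varying $y\in\YY$), whereas the paper's proof only details the compact case and contains typographical slips in that inequality.
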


Combining these assumptions with Tarski-Seidenberg theorem and the properties of semi-algebraic functions \cite{bochnak1987geometrie}, we obtain the following. 
\begin{proposition}[Properties of $p$ and $g$]
    \label{p:gdiff}
    Suppose that Assumption~\ref{a:LC1} and Assumption~\ref{ass:psing} are satisfied. Then 
    \begin{enumerate}
        \item \label{pi:gdiff2} $g$ is differentiable and for all  $\bar{x} \in\R^d $,  $\nabla g(\bar{x}) =\nabla_x L(\bar{x},p(\bar{x}))$,
        \item \label{pi:gdiff3} both the value function $g$ and the best response $p$ are semi-algebraic,
        \item \label{pi:gdiff4} the gradient of the value function, $\nabla g$, is locally H\"older.
    \end{enumerate}
\end{proposition}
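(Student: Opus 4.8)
The plan is to establish the three items in order, observing that item (iii) relies on the formula proved in (i) together with the semi-algebraicity proved in (ii).

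For item (i) I would run a Danskin-type squeezing of the difference quotient. Fix $\bar x\in\R^d$ and set $\bar y=p(\bar x)$, so that $g(\bar x)=L(\bar x,\bar y)$. For a \emph{lower} estimate, $\bar y\in\YY$ is feasible at every $x$, hence $g(x)\geq L(x,\bar y)$, and differentiability of the $C^1$ map $L(\cdot,\bar y)$ at $\bar x$ gives $g(x)-g(\bar x)\geq\langle\nabla_x L(\bar x,\bar y),x-\bar x\rangle+o(\|x-\bar x\|)$. For an \emph{upper} estimate, optimality of $\bar y$ at $\bar x$ yields $L(\bar x,p(x))\leq g(\bar x)$, so that $g(x)-g(\bar x)\leq L(x,p(x))-L(\bar x,p(x))=\int_0^1\langle\nabla_x L(\bar x+t(x-\bar x),p(x)),x-\bar x\rangle\,dt$. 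Subtracting $\langle\nabla_x L(\bar x,\bar y),x-\bar x\rangle$ and bounding the integrand for $x$ in a neighborhood of $\bar x$ (where $\bar x+t(x-\bar x)$ and, by continuity of $p$, the values $p(x)$ stay in fixed compact sets) via the Lipschitz-type inequality of Assumption~\ref{a:LC1} produces a remainder controlled by $(\tfrac{\beta_1}{2}\|x-\bar x\|+\beta_2\|p(x)-p(\bar x)\|)\|x-\bar x\|$. The first term is $O(\|x-\bar x\|^2)$, and since $p$ is continuous (Assumption~\ref{ass:psing}-H2) one has $\|p(x)-p(\bar x)\|\to 0$, so the remainder is $o(\|x-\bar x\|)$. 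The two estimates squeeze the quotient, proving $g$ differentiable at $\bar x$ with $\nabla g(\bar x)=\nabla_x L(\bar x,p(\bar x))$.

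For item (ii) I would invoke Tarski-Seidenberg. Since $\YY$ and $L$ are semi-algebraic, the set $\{(x,y):y\in\YY,\ \forall z\in\YY,\ L(x,z)\leq L(x,y)\}$ is defined by a first-order formula over the reals, hence semi-algebraic; as $p$ is single valued this set is exactly its graph, so $p$ is semi-algebraic. Likewise $\{(x,t):\exists y\in\YY,\ t=L(x,y)\ \text{and}\ \forall z\in\YY,\ L(x,z)\leq t\}$ is semi-algebraic and is the graph of $g$, so $g$ is semi-algebraic; alternatively $g$ is the composition $L\circ(\mathrm{Id},p)$ of semi-algebraic maps.

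For item (iii) I would combine the previous two. By (i), $\nabla g=\nabla_x L(\cdot,p(\cdot))$. The partial gradient $\nabla_x L$ of the $C^1$ semi-algebraic function $L$ is itself semi-algebraic (its graph is definable from $L$), and $p$ is semi-algebraic by (ii), so $\nabla g$ is semi-algebraic as a composition; it is also continuous, being the composition of the jointly continuous map $\nabla_x L$ (Assumption~\ref{a:LC1}) with the continuous map $p$. It then remains to invoke the classical fact that a continuous semi-algebraic map is locally H\"older (a consequence of the {\L}ojasiewicz inequality / curve selection for semi-algebraic functions), which is precisely the asserted local H\"older regularity. The main obstacle is item (i): the delicate point is showing the upper-bound remainder is $o(\|x-\bar x\|)$, which is exactly where the $y$-Lipschitz control of Assumption~\ref{a:LC1} must be coupled with continuity of the best response $p$ — without the latter, the cross term $\beta_2\|p(x)-p(\bar x)\|\,\|x-\bar x\|$ need not be negligible. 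Items (ii) and (iii) are then essentially bookkeeping with Tarski-Seidenberg and the standard semi-algebraic H\"older theorem.
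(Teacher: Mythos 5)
Your proof is correct, and on items (ii) and (iii) it follows essentially the paper's own route: first-order definability plus Tarski--Seidenberg for semi-algebraicity of $p$ and $g$, then the standard fact that a continuous semi-algebraic map is locally H\"older (the paper's Proposition~\ref{p:phold}) applied to $\nabla g=\nabla_x L(\cdot,p(\cdot))$, whose continuity follows from joint continuity of $\nabla_x L$ and Assumption~\ref{ass:psing}-H2. The genuine difference is item (i): the paper disposes of it with a one-line citation of \cite[Theorem~10.31]{Rock98}, an envelope/Danskin-type theorem, whereas you prove it from scratch by squeezing the difference quotient --- the feasibility bound $g(x)\geq L(x,\bar y)$ from below, the suboptimality bound $g(x)-g(\bar x)\leq L(x,p(x))-L(\bar x,p(x))$ from above, with the integral remainder controlled by the mixed Lipschitz estimate of Assumption~\ref{a:LC1} and annihilated by continuity of $p$. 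Your version is self-contained and makes visible exactly where each hypothesis enters: H1 gives the single-valued $p$, the $\beta_2\|y_1-y_2\|$ term of Assumption~\ref{a:LC1} produces the cross term $\beta_2\|p(x)-p(\bar x)\|\,\|x-\bar x\|$, and H2 is precisely what makes that term $o(\|x-\bar x\|)$; the paper's citation is shorter but hides this mechanism inside a general parametric-optimization theorem whose hypotheses must still be matched to the present setting. A minor cosmetic difference in (ii): the paper establishes semi-algebraicity of $g$ through its epigraph and hypograph (whose complements are projections of semi-algebraic sets), while you define the graphs of $p$ and $g$ directly by first-order formulas; these are the same Tarski--Seidenberg argument in two dressings.
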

\begin{remark}
Consider 
$L(x,y)= xy$,
$\YY=[-1,1]$, one sees that 
$g(x)=\max_{y\in[-1,1]}L(x,y) = |x|$ while $p(x)=\mbox{sign } x$ if $x\neq 0$ and $p(0)=[-1,1]$. This shows that Assumption~\ref{ass:psing} is a necessary assumption for $g$ to be  differentiable. One cannot hope in general for $\nabla g$ to  be locally Lipschitz continuous. For instance set $\YY=\R_+$, $L(x,y)=xy - \frac{1}{3} y^{3}$, then  $g(x)=\max_{y\in\RR^+}L(x,y) = \frac{2}{3}x^{3/2}$ with $\nabla g(x)=\sqrt{x}$.
\end{remark}

\paragraph{Comments and rationale of the method} At this stage the principles of our strategy can be made precise. We deal with problems which are dissymetric in structure: the argmax is easily computable or approximable while the block involving the minimizing variable is difficult to handle. This suggests to proceed as follows: one computes a best response mapping (even approximately), the gradient of the value function becomes accessible via formula~\ref{pi:gdiff2} in Proposition~\ref{p:gdiff}, and thus a descent step can be taken. The questions are: {\em which steps are acceptable? Can they be tuned automatically?} This is the object of the next sections.

\subsection{Gradient descent for nonconvex functions with globally H\"olderian gradient}
\label{sec:gradGlobelHolder}
The results of this section are self contained. We consider first the ideal case of a gradient method on a globally H\"older function with known constants, see e.g. \cite{Nest15,Yash16}.
We study Algorithm~\ref{algo:1}, previously presented in \cite{Yash16}
for which we prove sequential convergence. 
\begin{assumption}[Global H\"older regularity]
\label{ass:0f}
$f\colon\R^d \to\RR$ is $C^1$, semi-algebraic  and 
\begin{equation}
\forall x_1, x_2\in \R^d , \,
\quad \| \nabla f(x_1)- \nabla f(x_2)\|\leq \beta\|x_1-x_2\|^\nu, \mbox{ with }\beta>0, \,\nu\in ]0,1].
\end{equation}
\end{assumption}

\begin{algorithm}
\label{algo:0}
\SetKwInOut{Init}{Initialization}
\KwIn{$\nu\in\rzeroun$, $\beta\in\RPP$ and
$\gamma\in]0,(\nu+1)/\beta[$}
\Init{$x_0\in\R^d $}
\For{$n=0,1,\ldots$}
{
$\gamma_n (x_n)= \gamma
\left(\frac{\nu+1}{\beta}\right)^{1/\nu-1}\|\nabla f(x_n)\|^{1/\nu-1}$\\
$x_{n+1} = x_n - \gamma_n(x_n) \nabla f(x_n)$
}
\caption{H\"older gradient method}
\end{algorithm}

\begin{proposition}[Convergence of the H\"older gradient method for nonconvex  functions]
\label{p:gr}
Under Assumption~\ref{ass:0f}, consider a bounded sequence $(x_n)_{n\in\NN}$  generated by
Algorithm~\ref{algo:0}.
Then the following hold:
\begin{enumerate}
\item
\label{p:gro}
the sequence $(f(x_n))_{n \in \NN}$ is nonincreasing and converges,
\item
\label{p:gri}
the sequence $(x_n)_{n\in\NN}$ converges to a critical point
$x^*\in\R^d $ of $f$, i.e., $\nabla f(x^*)=0$,
\item
\label{p:grii}
for every $n\in\NN$ \footnote{This result is essentially present in \cite{Yash16}}, 
\begin{equation*}
\min_{0\leq k\leq n}\|\nabla
f(x_k)\|^{\frac{1}{\nu}+1}\leq 
\left[\frac{f(x_0)-f(x^*)}{\gamma-\gamma^{\nu+1}\left(\frac{\beta}{\nu+1}\right)^\nu}
\left(\frac{\beta}{\nu+1}\right)^{\frac{1-\nu}{\nu}}\right]
\dfrac{1}{n+1} = O\left( \frac{1}{n}\right).
\end{equation*}
Choosing $\gamma = \frac{\nu+1}{\beta}\left(\frac{1}{\nu+1}\right)^{1 / \nu}$, we obtain
\begin{equation*}
\min_{0\leq k\leq n} \|\nabla f(x_k)\|^{\frac{1}{\nu}+1}\leq
\frac{(f(x) - f(x^*)) \beta^{1/\nu} (\nu+1)}{\nu(n+1)}.
\end{equation*}

\end{enumerate}
\end{proposition}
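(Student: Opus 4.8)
The plan is to base everything on a single H\"olderian descent inequality, then to split the argument into an elementary ``energy'' part giving items (i) and (iii), and a harder ``trajectory'' part giving item (ii) via the Kurdyka--\L ojasiewicz inequality. First I would establish the H\"older analogue of the descent lemma: writing $f(x_{n+1})-f(x_n)=\langle\nabla f(x_n),x_{n+1}-x_n\rangle+\int_0^1\langle\nabla f(x_n+t(x_{n+1}-x_n))-\nabla f(x_n),x_{n+1}-x_n\rangle\,dt$ and bounding the integrand with Assumption~\ref{ass:0f} yields
\[
f(x_{n+1})\leq f(x_n)-\gamma_n\|\nabla f(x_n)\|^2+\frac{\beta}{\nu+1}\gamma_n^{\nu+1}\|\nabla f(x_n)\|^{\nu+1}.
\]
Substituting the explicit step $\gamma_n=\gamma\big(\tfrac{\nu+1}{\beta}\big)^{1/\nu-1}\|\nabla f(x_n)\|^{1/\nu-1}$, a direct exponent count shows both terms collapse onto the same power $\|\nabla f(x_n)\|^{1/\nu+1}$, giving
\[
f(x_{n+1})\leq f(x_n)-C\,\|\nabla f(x_n)\|^{\frac1\nu+1},\qquad C=a\Big[1-\Big(\tfrac{\gamma\beta}{\nu+1}\Big)^{\nu}\Big],\quad a=\gamma\Big(\tfrac{\nu+1}{\beta}\Big)^{\frac1\nu-1}.
\]
The standing step-size condition $\gamma<(\nu+1)/\beta$ is precisely what forces $C>0$.

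This inequality already gives (i): $(f(x_n))$ is nonincreasing, and it is bounded below since $(x_n)$ is bounded and $f$ continuous, hence it converges to some $\ell$. For (iii) I would sum the descent inequality from $0$ to $n$, telescoping the left-hand side to $f(x_0)-f(x_{n+1})\leq f(x_0)-\ell$, and bound the resulting sum below by $(n+1)\min_{k\le n}\|\nabla f(x_k)\|^{1/\nu+1}$; a short constant check confirms that $1/C$ equals the stated bracket and that the displayed optimal choice of $\gamma$ produces the clean constant. In particular $\|\nabla f(x_n)\|\to 0$.

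The core of the proof is (ii), where I would exploit two exact identities produced by the algorithm, namely $\|x_{n+1}-x_n\|=a\,\|\nabla f(x_n)\|^{1/\nu}$ (equivalently $\|\nabla f(x_n)\|=a^{-\nu}\|x_{n+1}-x_n\|^{\nu}$), together with the sufficient decrease recast as $f(x_n)-f(x_{n+1})\geq Ca^{-(1+\nu)}\|x_{n+1}-x_n\|^{1+\nu}$. Since $f$ is semi-algebraic it satisfies the Kurdyka--\L ojasiewicz inequality at a cluster point $x^*$ (which exists by boundedness, with $f(x^*)=\ell$), with a concave desingularizing function $\psi$. Setting $\Delta_n=f(x_n)-\ell$ and using concavity, $\psi(\Delta_n)-\psi(\Delta_{n+1})\geq\psi'(\Delta_n)\big(f(x_n)-f(x_{n+1})\big)$. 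The crucial cancellation is that $\|x_{n+1}-x_n\|^{1+\nu}=a^{\nu}\|\nabla f(x_n)\|\,\|x_{n+1}-x_n\|$, so the KŁ bound $\psi'(\Delta_n)\|\nabla f(x_n)\|\geq1$ converts the chain into
\[
\psi(\Delta_n)-\psi(\Delta_{n+1})\geq \frac{C}{a}\,\|x_{n+1}-x_n\|,
\]
the extra power $\nu$ in the gradient/step relation exactly absorbing the non-quadratic power $1+\nu$ of the decrease. Telescoping then gives $\sum_n\|x_{n+1}-x_n\|\leq \tfrac{a}{C}\psi(\Delta_0)<\infty$, so $(x_n)$ is Cauchy and converges to $x^*$, which is critical by (iii) and continuity of $\nabla f$.

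The main obstacle is the usual technical core of any KŁ argument: one may invoke the KŁ inequality only on a neighborhood of $x^*$ on which $f(x^*)<f<f(x^*)+\eta$, so one must first show that the tail of $(x_n)$ enters and then \emph{remains} in such a neighborhood. This is the standard bootstrapping step, combining the fact that $x^*$ is a cluster point, that $\Delta_n\downarrow0$, and the summable step-length estimate above to control $\sum\|x_{k+1}-x_k\|$ over the tail and keep the iterates trapped; one also disposes at the outset of the degenerate case $\nabla f(x_{n_0})=0$, where the iteration stalls and the conclusion is immediate. Carrying out this localization rigorously, while correctly tracking the non-standard exponents $1+\nu$ and $\nu$ through every estimate, is where the care must be concentrated.
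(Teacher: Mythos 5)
Your proposal is correct and follows essentially the same route as the paper: the same H\"older descent lemma (Lemma~\ref{l:desc}), the same sufficient-decrease inequality with the same constants (your $C$ is exactly the paper's $\delta/\gamma_n$ factor, i.e., $1/C$ equals the bracket in~\ref{p:grii}), and the same \L ojasiewicz concavity-plus-trapping argument for item~\ref{p:gri}, which the paper packages into Lemma~\ref{l:le1} and Theorem~\ref{t:tr}. The only cosmetic differences are that you phrase the desingularization with a general concave KL function $\psi$ where the paper uses the power form $s\mapsto s^{1-\theta}$, and that the localization step you defer as ``standard bootstrapping'' is precisely the induction carried out in Theorem~\ref{t:tr}\ref{t:tri}.
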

\subsection{The ``Backtrack H\"older'' gradient algorithm and  diagonal backtracking}
\label{sec:backtrackHolder}
In practice, the constants are unknown and the H\"olderian properties are merely
 local. The algorithm we present now (Algorithm~\ref{algo:1}),   is in the spirit of  the classical backtracking method, see e.g., \cite{bertsekas2014constrained}.   
The major difference is that we devise a {\em diagonal backtracking}, to detect both constants $\beta,\nu$ at  once in a single searching pass. 

\begin{assumption}
\label{ass:1f}
$f\colon\R^d \to\RR$ is a $C^1$ semi-algebraic  function such that
$\nabla f$ is locally H\"older.
\end{assumption}
In the following algorithm, $\alpha,\gamma>0$ are step length parameters, $\delta>0$ is a sufficient-decrease threshold and $\rho>0$ balances the search between the unknown exponent $\nu$ and the unknown multiplicative constant $\beta$, see Assumption~\ref{ass:0f}.
\begin{algorithm}[!h]
\label{algo:1}
\SetKwInOut{Init}{Initialization}
\KwIn{$\delta,\alpha\in\zeroun$ and $\gamma,\rho\in\RPP$}
\Init{$x_0\in\R^d $, $k_{-1}=0$}
\For{$n=0,1,\ldots$}{
$k = k_{n-1}$ \\ $\gamma_n(x_n) = \alpha^{k}\min\{1,\|\nabla f(x_n)\|^{\rho k}\}\gamma$\\
\While{
$f(x_n-\gamma_n(x_n) \nabla f(x_n) )> f(x_n)
- \delta\gamma_n(x_n)\|\nabla f(x_n)\|^2$
}
{
$k = k+1$\\
$\gamma_n(x_n) = \alpha^k\min\{1,\|\nabla f(x_n)\|^{\rho k}\}\gamma$\\
}
$k_n = k$\\
$x_{n+1} = x_n - \gamma_n(x_n) \nabla f(x_n)$
}
\caption{Backtrack H\"older gradient method}
\end{algorithm}

The following theorem provides convergence guarantees under local H\"older continuity (Assumption~\ref{ass:1f} for Algorithm~\ref{algo:1}).
\begin{theorem}[Convergence of Backtrack H\"older  for nonconvex functions]
\label{t:lochold}
Under As\-sump\-tion~\ref{ass:1f}, consider  a bounded sequence $(x_n)_{n\in\NN}$ generated by
Algorithm~\ref{algo:1}. 
Then the following hold:
\begin{enumerate}
\item
\label{t:lhe0}
$(\gamma_n)_{n\in\NN}$ is well defined,
\item
\label{t:lheii0}
the sequence $(f(x_n))_{n \in \NN}$ is nonincreasing and converges,
\item
\label{t:lheii}
there exists $x^* \in \R^d $ such that
$x_n\to x^*$ and $\nabla f(x^*)=0,$
\item
\label{t:lheiii} the while loop has a uniform finite bound 
$\bar{k} : = \sup_{n\in\NN} k_n < \pinf$.  Moreover 
\[\min_{0\leq i\leq n} \|\nabla f (x_i)\| = O\left(\frac{1}{n^{\frac{1}{2
+ \rho \bar{k}}}}\right),\]
\item
\label{t:lheiv}
suppose moreover that there exist $\beta\in\RPP$ and $\nu\in]0,1]$ such that
$\nabla f$ is globally $(\beta,\nu)$ H\"older. Then
\begin{equation}
\label{e:boundkn}
\sup_{n\in\NN} k_n\leq 1+\frac{1}{\nu}\max\left\{\frac{\log \left(\frac{(1-\delta)(\nu+1)}{\gamma^\nu\beta}\right)}{
\log(\alpha)},
\frac{1-\nu}{\rho} \right\}.
\end{equation}
\end{enumerate}
\end{theorem}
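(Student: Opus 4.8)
The plan is to prove the five claims in the order (i), (ii), (iv)--(v), and finally (iii), since the uniform bound on the inner counter feeds the convergence analysis. Throughout I would fix a compact convex set $V\subset\RR^d$ containing the bounded orbit $(x_n)_{n\in\NN}$ together with all trial points, and use Assumption~\ref{ass:1f} to get constants $\beta>0$, $\nu\in\,]0,1]$ with $\|\nabla f(x)-\nabla f(x')\|\leq\beta\|x-x'\|^\nu$ on $V$. The workhorse is the H\"older descent inequality obtained by integrating this bound along segments:
\begin{equation*}
f(x') \leq f(x) + \scal{\nabla f(x)}{x'-x} + \frac{\beta}{\nu+1}\|x'-x\|^{\nu+1}, \qquad x,x'\in V.
\end{equation*}
For (i), plugging $x'=x_n-\gamma_n(x_n)\nabla f(x_n)$ into this inequality, the Armijo-type test is guaranteed whenever $\tfrac{\beta}{\nu+1}\gamma_n(x_n)^\nu\|\nabla f(x_n)\|^{\nu-1}\leq 1-\delta$. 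Since $\gamma_n(x_n)=\alpha^k\min\{1,\|\nabla f(x_n)\|^{\rho k}\}\gamma\to 0$ as the inner counter $k$ grows (because $\alpha\in\,]0,1[$), the left-hand side vanishes and the loop halts after finitely many steps, which is well-definedness. Claim (ii) is then immediate: acceptance gives $f(x_{n+1})\leq f(x_n)-\delta\gamma_n(x_n)\|\nabla f(x_n)\|^2\leq f(x_n)$, and continuity of $f$ on the compact $V$ bounds this monotone sequence from below, so $(f(x_n))_n$ converges to some $f^*$.

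For (iv), the first real obstacle, the key point is that the design factor $\min\{1,\|\nabla f(x_n)\|^{\rho k}\}$ makes the acceptance threshold independent of the gradient magnitude. Feeding $\gamma_n(x_n)$ into the test and splitting on whether $\|\nabla f(x_n)\|$ is $\geq 1$ or $<1$, I would show that the test holds as soon as $k\geq \tfrac1\nu\max\{\log((1-\delta)(\nu+1)/(\gamma^\nu\beta))/\log\alpha,\ (1-\nu)/\rho\}$; the role of $\rho$ is precisely to absorb the factor $\|\nabla f(x_n)\|^{1-\nu}$ in the small-gradient regime, which is why this bound is uniform in $n$. Because the counter is warm-started ($k=k_{n-1}$) and only increases inside the loop, $(k_n)_n$ is nondecreasing and capped by this $n$-independent bound, giving $\bar k=\sup_n k_n<\infty$; under the stronger Assumption~\ref{ass:0f} the constants $\beta,\nu$ are global and the same computation yields the explicit bound of (v). The rate follows by summing the sufficient decrease, $\delta\sum_i\gamma_i\|\nabla f(x_i)\|^2\leq f(x_0)-f^*$, then lower-bounding $\gamma_i\geq\alpha^{\bar k}\gamma\min\{1,\|\nabla f(x_i)\|^{\rho\bar k}\}$ and using that $\|\nabla f(x_i)\|<1$ eventually, so that $\sum_i\|\nabla f(x_i)\|^{2+\rho\bar k}<\infty$ and hence $\min_{i\le n}\|\nabla f(x_i)\|=O(n^{-1/(2+\rho\bar k)})$.

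Finally, (iii) is where semi-algebraicity enters and is the heart of the argument. From $d_n:=\|x_{n+1}-x_n\|=\gamma_n\|\nabla f(x_n)\|$ and $\gamma_n\le\gamma$ one gets $d_n^2\le\gamma\,\gamma_n\|\nabla f(x_n)\|^2\to0$, so the limit-point set $\Omega$ of $(x_n)_n$ is nonempty, compact, connected, with $f\equiv f^*$ on it. The decisive observation is that the backtracking decrease is compatible with the Kurdyka--{\L}ojasiewicz inequality for the semi-algebraic $f$: with $\varphi$ a desingularizing function valid uniformly on a neighborhood of $\Omega$, concavity of $\varphi$ together with the KL inequality give
\begin{equation*}
\varphi(f(x_n)-f^*)-\varphi(f(x_{n+1})-f^*)\ \geq\ \frac{f(x_n)-f(x_{n+1})}{\|\nabla f(x_n)\|}\ \geq\ \frac{\delta\gamma_n\|\nabla f(x_n)\|^2}{\|\nabla f(x_n)\|}\ =\ \delta\,d_n .
\end{equation*}
Summing this telescoping bound over the tail shows $\sum_n d_n<\infty$, so $(x_n)_n$ is Cauchy and converges to a single $x^*$; criticality $\nabla f(x^*)=0$ then follows from continuity of $\nabla f$ and $\min_{i\le n}\|\nabla f(x_i)\|\to0$ from (iv).

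The hard part throughout is (iii). The usual descent-method convergence template needs a relative-error bound $\|\nabla f(x_{n+1})\|\lesssim\|x_{n+1}-x_n\|$, which fails here because $\gamma_n$ may vanish in the small-gradient regime. The remedy is the displayed identity: measuring the decrease by $\gamma_n\|\nabla f\|^2$ and the step length by $\gamma_n\|\nabla f\|$ leaves exactly one power of the gradient to cancel against the $1/\|\nabla f(x_n)\|$ coming from KL, so the potentially vanishing step size disappears and the finite-length estimate goes through with a fixed constant $\delta$.
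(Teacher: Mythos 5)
Your proposal is correct, and it rests on the same two pillars as the paper's proof --- the H\"older descent lemma (Lemma~\ref{l:desc}) to analyze the backtracking test, and a \L ojasiewicz-type telescoping estimate in which the step size $\gamma_n$ cancels --- but it organizes them along a genuinely different route. First, you prove the uniform bound $\bar k<\pinf$ of item~\ref{t:lheiii} \emph{before} convergence, by fixing a single compact convex set $V$ containing the whole orbit and all trial points and invoking the local H\"older property on $V$; the paper instead proves convergence $x_n\to x^*$ first (item~\ref{t:lheii}, via Theorem~\ref{t:tr}) and only then localizes the H\"older constants on a convex neighborhood $U$ of $x^*$, so its intermediate bound \eqref{eq:boundK} carries a trajectory-dependent term $k_{N_0}$. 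Your decoupling is legitimate precisely because the paper's notion of ``locally H\"older'' is uniform over bounded sets, and it buys a rate estimate whose derivation is independent of the convergence proof. Second, for sequential convergence you use a uniformized Kurdyka--\L ojasiewicz inequality (a desingularizing $\varphi$ valid on a whole neighborhood of the limit-point set $\Omega$, \`a la \cite{Bolt14a}), whereas the paper uses the power-form \L ojasiewicz inequality at a single cluster point combined with a self-contained trapping induction (Theorem~\ref{t:tr}\ref{t:tri}, Lemma~\ref{l:le1}). The two telescoping estimates are the same mechanism --- your displayed inequality is exactly Lemma~\ref{l:le1} with $\varphi(s)=s^{1-\theta}/(C(1-\theta))$, and your closing remark about the one surviving power of $\|\nabla f(x_n)\|$ cancelling the KL factor is precisely why that lemma needs no lower bound on $\gamma_n$ --- so your route gains a cleaner logical order at the cost of quoting the uniformization lemma, while the paper's version remains elementary and self-contained in its appendix. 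Two small points you should still write out: the claim that $\|\nabla f(x_i)\|<1$ eventually (it follows since summability of $\gamma_i\|\nabla f(x_i)\|^2$ together with $k_i\le\bar k$ forces $\min\{\|\nabla f(x_i)\|^2,\|\nabla f(x_i)\|^{2+\rho\bar k}\}\to 0$, hence $\nabla f(x_i)\to 0$), and the degenerate cases in the KL step ($f(x_n)=f^*$ or $\nabla f(x_n)=0$, in which the sequence is stationary from then on); both are routine.
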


\begin{remark}[Diagonal backtracking alternatives and comments]
\label{rem:kn}
In the previous theorem, we ask $(k_n)_{n\in\NN}$ to be a nondecreasing sequence and \eqref{e:boundkn}
is actually a bound on the total number of additional calls to the function in the while loop. In practice, this approach  might be too conservative and other strategies may provide much more aggressive steps at the cost of additional calls to the function. We will use two variations to update $(k_n)_{n\in\NN}$:
\begin{itemize}
    \item Initialize $k$ to $0$ for fine tuning  to the price of longer inner loops  (see Algorithm~\ref{algo:minmax_heur1}).
    \item For some iterations, decrease the value of $k$ by $1$ (see Algorithm~\ref{algosink} for example). 
\end{itemize}
The parameters $\alpha, \gamma, \delta$, and $\rho$ are made to tune finely the number of iterations spent on estimating local H\"older constants.
\end{remark}
In Theorem~\ref{t:lochold}\ref{t:lheiv}, we need to suppose that the gradient is {\em globally} H\"older  contrary to Assumption~\ref{ass:1f}. 

\section{Backtrack H\"older for min-max problems}
\label{sec:minmax}

Gathering the results provided in  Section~\ref{sec:backtrackHolder}, we provide here our main 
algorithm (Algorithm~\ref{algo:minmax}).
\begin{algorithm}
\label{algo:minmax}
\SetKwInOut{Init}{Initialization}
\KwIn{$\delta,\alpha\in\zeroun$ and $\gamma,\rho\in\RPP$}
\Init{$x_0\in\R^d $, $y_0 = \arg\max_{z \in \mathcal{Y}} L(x_0,z)$, $k_{-1}=0$}
\For{$n=0,1,\ldots$}{
$k = k_{n-1}$\\
$\gamma_n(x_n) = \gamma\alpha^{k}\min\{1,\|\nabla_x L(x_n,y_n)\|^{\rho k}\}$\\
$x = x_n-\gamma_n(x_n) \nabla_x L(x_n,y_n)$\\
$y = \arg\max_{z \in \mathcal{Y}} L(x,z)$\\
\While{
$L(x,y)> L(x_n,y_n)-
\delta\gamma_n(x_n)\|\nabla_x L(x_n,y_n)\|^2$}
{
$k = k+1$\\
$\gamma_n(x_n)=\gamma\alpha^{k}\min\{1,\|\nabla_x L(x_n,y_n)\|^{\rho k}\}$\\
$x = x_n-\gamma_n(x_n) \nabla_x L(x_n,y_n)$\\
$y = \arg\max_{z \in \mathcal{X}} L(x,z)$
}
$k_n = k$, \quad $x_{n+1} = x$, \quad $y_{n+1} = y$.
}
\caption{Monotone Backtrack H\"older for min-max}
\end{algorithm}

Several comments are in order:

--- The above contains an inner loop whose overhead cost becomes negligible as $n \to \infty$, this allows one for automatic step size tuning. The form of Algorithm~\ref{algo:minmax} is slightly different from Algorithm~\ref{algo:1} to avoid duplicate calls to the max-oracle required both to compute gradients and evaluate functions.

--- As described in Remark~\ref{rem:kn}, the backtracking strategy is one among others and it is  adaptable to different settings. In this min-max case, the cost of the max-oracle may have some impact: either it is costly and extra-flexibility is needed or it is cheap and it can be kept as is. Two examples are provided in Sections~\ref{sec:sink} and~\ref{sec:wgan}.

--- A direct modification of the above method, provides also an algorithm for
\begin{equation}
\label{e:minmin}
 \min_{x\in\R^d }\min_{y\in\YY}L(x,y).
\end{equation}

--- Algorithm~\ref{algo:minmax} is a model algorithm  corresponding to a monotone backtracking approach (i.e., the sequence $(k_n)$ is nondecreasing), but many other variants are possible, see Appendix~\ref{app:D}. 
 Algorithms~\ref{algosink} and~\ref{algo:minmin_armijo} are for the min-min problem, with a non monotone backtracking
and the same guarantees. A   heuristic version is also considered: it is  Algorithm~\ref{algo:minmax_heur1} where an approximation of the argmax is used.

To benchmark our algorithms, we  compare them to Algorithms~\ref{algo:minmin_const}, \ref{algo:minmin_armijo}, 
and~\ref{algo:minmax_const} in Appendix \ref{app:D}, with constant but finely tuned step sizes or with Armijo search. 

\begin{theorem}[Backtrack H\"older for min-max]
\label{t:minmaxbt}
Under Assumptions~\ref{a:LC1} and~\ref{ass:psing}, consider  the
sequences $(x_n)_{n\in\NN}$ and $(y_n)_{n\in\NN}$ generated by
Algorithm~\ref{algo:minmax}.
Suppose that $(x_n)_{n\in\NN}$ is bounded. Then
\begin{enumerate}
\item
 The while loop has a uniform bound, i.e.,  $\sup_{n\in\NN} k_n<\pinf$. 
\item $(x_n)_{n\in\NN}$
converges to $x^*$ in $\R^d $ and $(y_n)_{n \in \NN}$ converges to $y^*\in\YY$, with  $\nabla_x L(x^*,y^*) = 0$ and $y^*=\argmax_{y\in\YY} L(x^*,y)$.
\item  Suppose that there exist $\beta\in\RPP$ and $\nu\in]0,1]$ such that
$\nabla g$ is $(\beta,\nu)$ H\"older everywhere. Then the cost of the while loop is bounded by
\begin{align}
\label{e:boundkn2}
\sup_{n\in\NN} k_n\leq 1+ \frac{1}{\nu}\max\left\{\frac{\log \left(\frac{(1-\delta)(\nu+1)}{\gamma^\nu\beta}\right)}{\log(\alpha)} ,
\frac{1-\nu}{\rho} \right\}.
\end{align}
\end{enumerate}
\end{theorem}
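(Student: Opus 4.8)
The plan is to reduce Theorem~\ref{t:minmaxbt} to the already-established gradient-method results of Section~\ref{sec:backtrackHolder} by identifying Algorithm~\ref{algo:minmax} with Algorithm~\ref{algo:1} applied to the value function $g=\max_{y\in\YY}L(\cdot,y)$. The key observation is that under Assumptions~\ref{a:LC1} and~\ref{ass:psing}, Proposition~\ref{p:gdiff} guarantees that $g$ is $C^1$, semi-algebraic with $\nabla g(x)=\nabla_x L(x,p(x))$, and has a locally H\"older gradient, so $g$ satisfies Assumption~\ref{ass:1f}. Since $y_n=p(x_n)$ by construction of the algorithm, we have $\nabla_x L(x_n,y_n)=\nabla g(x_n)$, and moreover $L(x_n,y_n)=L(x_n,p(x_n))=g(x_n)$ while $L(x,y)=L(x,p(x))=g(x)$ for the trial point $x=x_n-\gamma_n(x_n)\nabla g(x_n)$. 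Thus the descent test $L(x,y)>L(x_n,y_n)-\delta\gamma_n(x_n)\|\nabla_x L(x_n,y_n)\|^2$ is \emph{exactly} the sufficient-decrease test of Algorithm~\ref{algo:1} written for $f=g$, and the stepsize update is identical term-for-term.

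Having made this identification, I would invoke Theorem~\ref{t:lochold} directly. First, the boundedness of $(x_n)_{n\in\NN}$ is assumed, so Theorem~\ref{t:lochold}\ref{t:lheiii} yields a uniform finite bound $\bar{k}=\sup_{n\in\NN}k_n<\pinf$, giving item~(i). Next, Theorem~\ref{t:lochold}\ref{t:lheii} produces $x^*\in\R^d$ with $x_n\to x^*$ and $\nabla g(x^*)=0$. Using $\nabla g(x^*)=\nabla_x L(x^*,p(x^*))$ and setting $y^*:=p(x^*)=\argmax_{y\in\YY}L(x^*,y)$, we immediately get $\nabla_x L(x^*,y^*)=0$ and $y^*=\argmax_{y\in\YY}L(x^*,y)$. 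Convergence $y_n\to y^*$ then follows from $y_n=p(x_n)$, $x_n\to x^*$, and the continuity of $p$ (Assumption~\ref{ass:psing}-H2), which establishes item~(ii). Finally, for item~(iii), the hypothesis that $\nabla g$ is globally $(\beta,\nu)$ H\"older lets me apply Theorem~\ref{t:lochold}\ref{t:lheiv} verbatim, delivering the bound~\eqref{e:boundkn2}, which coincides with~\eqref{e:boundkn}.

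The one point requiring genuine care, rather than a mechanical translation, is verifying that the algorithm is well posed, namely that $y=\argmax_{z\in\YY}L(x,z)$ is legitimately a single point at every trial so that the identities $L(x,y)=g(x)$ and $y_n=p(x_n)$ hold throughout the iteration. This is precisely where Assumption~\ref{ass:psing}-H1 (single-valuedness of $p$) enters, and it must be checked that every trial iterate stays in the region where $p$ is defined; since $p$ is defined on all of $\R^d$ by H1, this causes no difficulty. I also need to confirm that Assumption~\ref{a:LC1}'s joint continuity and local affine-in-both-variables Lipschitz estimate for $\nabla_x L$, together with continuity of $p$, are exactly what underwrites the local H\"older property of $\nabla g$ asserted in Proposition~\ref{p:gdiff}\ref{pi:gdiff4}; this has already been proved upstream, so I may cite it.

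The main obstacle I anticipate is not any of the above chain of invocations, which is essentially bookkeeping, but rather the subtle discrepancy flagged in the paper's comments: Algorithm~\ref{algo:minmax} is written in a form ``slightly different from Algorithm~\ref{algo:1} to avoid duplicate calls to the max-oracle.'' I must therefore argue carefully that, despite computing the trial pair $(x,y)$ before entering the while loop and reusing $y_n$ from the previous iteration, the sequence of stepsizes $\gamma_n(x_n)$ and accepted iterates produced by Algorithm~\ref{algo:minmax} genuinely coincides with those Algorithm~\ref{algo:1} would produce on $g$. The crux is to show that evaluating $g$ at a trial point is realized exactly by recomputing $y=p(x)$ and reading off $L(x,y)$, so that no information is lost by the reorganization; once this equivalence of the generated sequences is established, every conclusion transfers without further work.
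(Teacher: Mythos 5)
Your proposal is correct and follows essentially the same route as the paper's proof: identify Algorithm~\ref{algo:minmax} with Algorithm~\ref{algo:1} applied to the value function $g$ via Proposition~\ref{p:gdiff} (so that $\nabla_x L(x_n,y_n)=\nabla g(x_n)$ and the descent test reads $g(x)\le g(x_n)-\delta\gamma_n(x_n)\|\nabla g(x_n)\|^2$), then invoke Theorem~\ref{t:lochold} for items (i) and (iii) and the continuity of $p$ for the convergence $y_n\to y^*=p(x^*)$ in item (ii). The only difference is that you spell out the algorithm-equivalence step (that recomputing $y=\argmax_{z\in\YY}L(x,z)$ at each trial point realizes the evaluation $g(x)=L(x,y)$, so no information is lost by the reorganization), which the paper compresses into the single sentence ``It turns out that Algorithm~\ref{algo:minmax} applied to $L$ is the same as Algorithm~\ref{algo:1} applied to $g$''; this added care is a strength, not a deviation.
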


\begin{remark} In  \cite[Proposition~2]{Good14}, the authors mention an algorithm akin to what we proposed, but without backtracking. They insist on the fact that if one had access to the max-oracle, then one would be able to implement a gradient descent by using "sufficiently small updates".
Our theoretical results are an answer to this comment as we offer a quantitative characterization of how small the step should be,  as well as a backtracking estimation technique.
\end{remark}

\section{Numerical experiments}

 We compare our method with constant step size algorithm and Armijo backtracking for the
Generative Adversarial Network (GAN) problem, first using Sinkhorn divergences and second considering Wasserstein adversarial networks. Data lie  in $\R^d  = \RR^2$, the sample size is $N=1024$ and
we consider $x_1, \ldots, x_N\in\R^d $ a fixed sample from a distribution 
$\mathbb{P}_d$, which is a Gaussian mixture, see Figure~\ref{fig:datadistrib}, and $z_1, \ldots, z_N\in\mathcal{Z}$ 
a fixed sample from latent distribution $\mathbb{P}_z=U([0,1] \times [0,1])$, uniform on $\mathcal{Z}$, 
where $\mathcal{Z} = [0,1] \times [0,1]$. 
\begin{wrapfigure}{r}{0.5\linewidth}
\centering
\vspace{-0.8cm}
    \caption{Data distribution $x_1,\ldots,x_N$}
 \includegraphics[width=5cm,height=3.9cm]{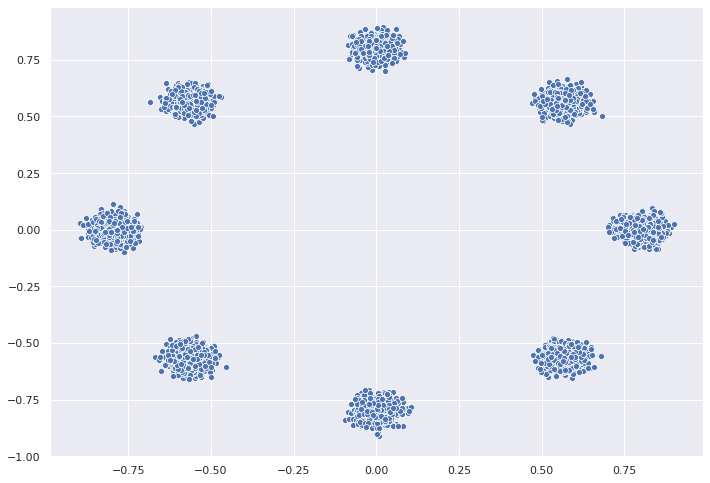}
\label{fig:datadistrib}
\end{wrapfigure}

We consider as {\em generator} $G$, a dense neural network with three hidden layers containing respectively 
$64$, $32$, $16$ neurons with a ReLU activation between each layer.
We write $G : \mathcal{Z} \times \Theta_G \to \mathcal{X}$, with inputs in $\mathcal{Z}$ and parameters $\theta_G \in \Theta_G$ where $\Theta_G = \RR^q$ with $q$ the total number of parameters of the network (2834 in our case). 

\subsection{Sinkhorn GAN}
\label{sec:sink}
We first consider training generative network using Sinkhorn divergences as proposed in \cite{pmlr-v84-genevay18a}. This is a min-min problem which satisfies Assumption~\ref{ass:psing} (see also the remark in Equation~\ref{e:minmin}). Sinkhorn algorithm \cite{sinkhorn1964,cuturi2013sinkhorn} allows us to compute a very precise approximation of the min-oracle required by our algorithm, we use it as an exact estimate. Note that the transport plan for the Sinkhorn divergence is regularized by an entropy term whence the inner minimization problem has a unique solution and the corresponding $p$ is continuous. This is a perfect example to illustrate our ideas. Consider the following probability measures
\begin{align*}
    \bar{\mu} &= \frac{1}{N} \sum_{i=1}^N \delta_{x_i}, &\text{(empirical target distribution)},\\
    \mu(\theta_G) &= \frac{1}{N} \sum_{i=1}^N \delta_{G(z_i,\theta_G)}, &\text{(empirical generator distribution)}.
\end{align*}
We then define the {\em Sinkhorn divergence} between these two distributions.
\begin{align*}
    \mathcal{W}_\epsilon(\bar{\mu},  \mu(\theta_G) ) &=  \min_{P \in \mathbb{R}_+^{N\times N}} \left\{\mbox{Tr}(PC(\theta_G)^T) + \epsilon \sum_{i,j=1}^N P_{ij} \log(P_{ij})\; ;\; P1_N=1_N, P^T1_N=1_N \right\}
\end{align*}
where $\epsilon>0$ is a regularization parameter, $\displaystyle
C(\theta_G)=\left[\|G(z_i,\theta_G)-x_j\|\right]_{i,j}\in\mathbb{R}^{N
\times N}$ is the pairwise distance matrix between target observations
$(x_i)_{1\leq i \leq N}$ and generated observations $(G(z_i,\theta_G))_{1\leq
i\leq N}$. Here Tr is the trace, and $1_N$ is the all-ones vector. The optimum is unique thanks to the entropic regularization and the optimal transportation plan $P$ can be efficiently estimated with an arbitrary precision by Sinkhorn algorithm \cite{sinkhorn1964,cuturi2013sinkhorn}. 

Training our generative network amounts to solving the following  min-min problem
\[
    \min_{\theta_G} \mathcal{W}_\epsilon(\bar{\mu},  \mu(\theta_G) ).
\]

\begin{remark}[Global subanalyticity ensures \L ojasiewicz inequality]
The cost function of the Sinkhorn GAN problem is not 
semi-algebraic due to log. However we never use the logarithm in a neighborhood of $0$ during the optimization process because of its  infinite slope. Hence the loss can actually be seen as globally subanalytic. 
Whence $p$, $g$ are globally subanalytic and the \L ojasiewicz inequality as well as H\"older properties still hold, see  \cite{bochnak1987geometrie,attouch2010proximal, bolte2018nonconvex} for more on this.
\end{remark}

\paragraph{Algorithmic strategies for Sinkhorn GAN}The monotone diagonal backtracking is too conservative for this case, so we use a variant described in Algorithm~\ref{algosink} instead. At each step, the idea is to try to {\em decrease} $k$ of $1$ whenever possible, keeping some sufficient-decrease property valid. Otherwise $k$ is increased as in the monotone method, until sufficient decrease of the value is ensured. This approach is particularly adapted to the Sinkhorn case, because estimating the best response is cheap.

Note that, to propose a fair comparison and keep the same complexity between algorithms, we count each call to the min-oracle, both in the outer and in the inner while loop, as an iteration step.
The parameters used in this experiment for Algorithm~\ref{algosink} are $\gamma = 1$, $\alpha = 0.5$, $\delta = 0.25$, $\delta_+ =0.95$, and $\rho = 0.5$. We compare with Algorithm~\ref{algo:minmin_const} presented in Appendix~\ref{app:D}, which is a constant step size variant, we try with different step size parameters $\gamma \in \{0.01, 0.05, 0.1\}$.
We compare with the standard Armijo backtracking algorithm (see Algorithm~\ref{algo:minmin_armijo} in Appendix~\ref{app:D}) which uses a similar approach as in Algorithm~\ref{algosink} to tune the step size parameter $\gamma_n$, but does not take advantage of the H\"older property. All algorithms are initialized randomly with the same seed.

\begin{figure}[ht]
\centering
\includegraphics[width=0.45\linewidth]{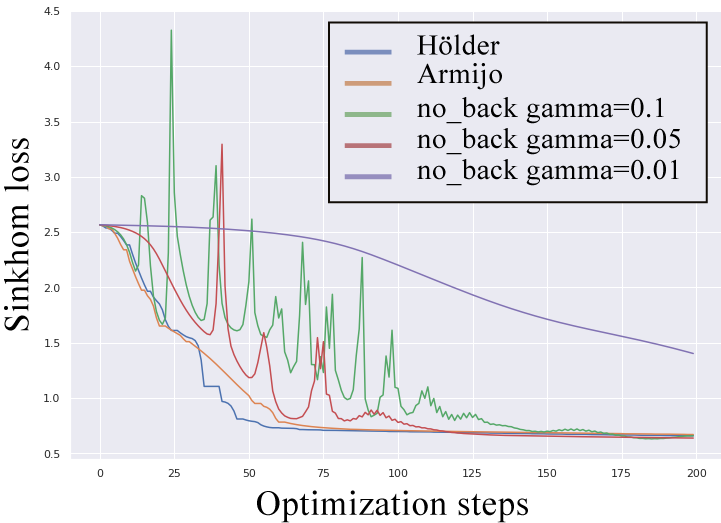}
\qquad
  \includegraphics[width=0.45\linewidth]{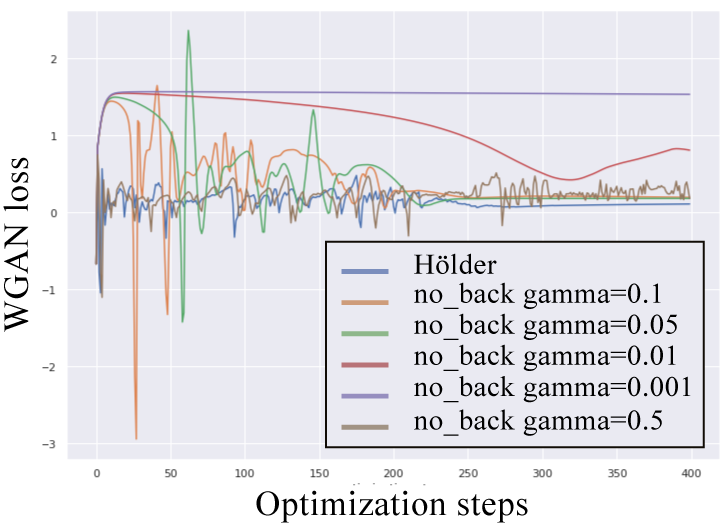}
\caption{Left: Sinkhorn loss with respect to number of Sinkhorn max-oracle evaluation for different gradient step rules. The $x$ axis accounts for all oracle calls, not only the ones used to actually perform gradient steps.
Right: WGAN loss with respect to iteration number for different gradient step rules.}
\label{fig:sinkreswass}
\end{figure}

We observe on the left part of Figure~\ref{fig:sinkreswass} that both H\"older and Armijo backtracking provide decreasing sequence and avoid oscillations. Both algorithms converge faster than the constant step size variant. Furthermore, since our algorithm can take into account the norm of the gradient, the number of intern loop is smaller and that explain why the Non Monotone H\"older backtracking is faster.
  
\subsection{Wasserstein GAN}
\label{sec:wgan}
We treat the Wasserstein GAN (WGAN) heuristically with an approximation of the max-oracle and use Algorithm~\ref{algo:minmax_heur1} in Appendix~\ref{app:D} which matches this setting.

Consider a second neural network, called \emph{discriminator}, $D : \R^d \times \Theta_D \to \mathbb{R} $ with inputs in $\R^d$ and parameters $\theta_D \in \Theta_D$ whose architecture is the same as $G$ (i.e.,  $\Theta_D=\Theta_G$) but with a fullsort activation  between each layer, see \cite{pmlr-v97-anil19a}. We consider the following problem
\[
\min_{\theta_G}
\max_{\theta_D}
\sum_{i=1}^n D(x_i,\theta_D) - \sum_{j=1}^n D(G(z_j,\theta_G),\theta_D).
\]

In order to implement the analogy with Kantorovitch duality in the context of GANs \cite{Arjo17}, one has to ensure that the discriminator $D$ is 1-Lipschitz, when seen as a function of its input neurons. This is enforced using a specific architecture for the discrimintator network $D$. We use Bjork orthonormalization and fullsort activation functions \cite{pmlr-v97-anil19a} which ensure that the network is $1$-Lipschitz without any restriction on its weight parameters $\theta_D$. 

For this problem, we use Algorithm~\ref{algo:minmax_heur1}, which is a heuristic modification of our method designed to deal with the inner max. 
Both the argmax and max are indeed approximated by gradient ascent. Algorithm~\ref{algo:minmax_heur1} then implements the same bactracking idea which  is evaluated on the same to benchmark as in the previous section. Extra discussions are provided in the Appendix.
Doing so, the extra-cost induced by the  while loop becomes negligible and we can find the optimal value of $k$ by exhaustive search. For this reason, in this heuristic context, H\"older backtracking schemes have very little advantage compared to Armijo and we do not report comparison. Detailed investigations for large scale networks is postponed to future research. Since  GAN's training is delicate 
 in practice \cite{Gulr17}, we provide comparison with many step size choices for the constant step size algorithm. 
    Algorithm~\ref{algo:minmax_const} that the difference between Armijo method and ours
 As for Backtrack H\"older min-max, we use parameters $\gamma=1$, $\delta = 0.75$, $\alpha=0.75$, and $\rho=0.20$
for the H\"older backtracking algorithm and constant step size parameter $\gamma \in\{0.001,0.01,0.05,0.1,0.5\}$ for the constant step size variant. All algorithms are initialized randomly with the same seed.

Figure~\ref{fig:sinkreswass} displays our results on the right. The optimal loss equals $0$. One observes that   constant large steps are extremely oscillatory while small steps are stable but extremely slow. Backtrack H\"older takes the best of the two world, oscillates much less and stabilizes closer to the optimal loss value compared to constant step size variants.

\section*{Broader impact}
The authors think that this work is essentially theoretical and that this section does not apply.

{\bf Acknowledgements.} The authors acknowledge the support of ANR-3IA Artificial and Natural Intelligence Toulouse Institute. JB and EP thank Air Force Office of Scientific Research, Air Force Material Command, USAF, under grant numbers FA9550-19-1-7026, FA9550-18-1-0226, and ANR MasDol. JB acknowledges the support of ANR Chess, grant ANR-17-EURE-0010 and ANR OMS.

\bibliographystyle{plain}
\bibliography{my_bib}

\newpage
\appendix

\section{Extra-material}

\begin{definition}[Semi-algebraic sets and functions]\label{def:sa}
\begin{enumerate}
\item A subset $S$ of $\RR^m$ is a \emph{real semi-algebraic set}
if there exist $r$ and $s$ two positive integers, and,
for every $k\in\{1,\ldots,r\}$,  $l\in\{1,\ldots,s\}$,
two real polynomial functions $P_{kl}$, $Q_{kl}\colon\RR^m\to\RR$ such that
\[S=\bigcup_{k=1}^r\bigcap_{l=1}^s \{x\in\R^m: P_{kl}(x)<0,
Q_{kl}(x)=0\}\]
\item 
A function $f:A\subset \R^m\to\R^n$ is semi-algebraic if its graph $\menge{(x,\lambda)\in A\times\RR^{n}}{f(x)=\lambda}$
 is a
semi-algebraic subset of $\RR^{m+n}$.
\end{enumerate}
\end{definition}
For illustrations of this notion in large scale optimization and machine learning we refer to \cite{attouch2010proximal,castera2019inertial}.  

One will also find in this work references, definitions and examples of globally subanalytic sets that are necessary for our proofs on Sinkhorn GANs.
\section{Proofs}

\begin{proofof}{Proposition~\ref{p:gdiff0}}
Let us proceed with the case when $\YY$ is compact; the other case is similar. Let $(x_n)_{n\in\NN}$ be a sequence such that $x_n\to x^*\in\R^d $. We need
to prove that $p(x_n)\to p(x^*)$. For every $n\in\NN$, set $y_n = p(x_n)$ and since $\YY$ is
compact, let $y^*\in\YY$ a cluster point of $(y_n)_{n\in\NN}$. Since
$g$ and $L$ are continuous we have $g(x_n)\to g(x^*)$ and
$L(x_{n_k},y_{n_k})\to L(x^*,y^*)$. Since $p(x_{n_k})=y_{n_k}$ one has
$L(x_{n_k},y_{n_k})\geq L(x,y_{n_k})$ for all $x$ in $\R^d $.
Thus at the limit $L(x^*,y^*)\leq L(x,y^*)$ for all $x$ in $\R^d $.
This implies that $L(x^*,y^*)=g(x^*)$, and so, by uniquennes of the argmax,
$p(x^*) =  y*$. Whence $p$ is continuous.
\end{proofof}

\begin{proofof}{Proposition~\ref{p:gdiff}~\ref{pi:gdiff2}}
This is a consequence of \cite[Theorem~10.31]{Rock98}.
\end{proofof}

\begin{proofof}{Proposition~\ref{p:gdiff}~\ref{pi:gdiff3}}
According to the definition of a semi-agebraic function, we  need to prove that their graph is semi-algebraic.

For $g$: the
\begin{equation*}
\epi g =
\menge{(x,\xi)\in\RR^d\times\RR}{g(x)\leq\xi} =
\menge{(x,\xi)\in\RR^d\times\RR}{(\forall y\in\YY)\quad
L(x,y)\leq\xi}
\end{equation*}
and its complement set is
$\menge{(x,\xi)\in\RR^d\times\RR}{(\exi y\in\YY)\ L(x,y)>\xi}$
which is the projection of
\[\menge{(x,\xi,y)\in\RR^d\times\RR\times\RR^{d'}}{L(x,y)>\xi}\, \bigcap
\,\RR^d \times\RR \times \YY.\]
As a conclusion it is semi-algebraic by Tarski-Seidenberg principle. The same being true for the hypograph, $g$ is semi-algebraic.

For $p$:
$\gra p= \menge{(x,y)\in\R^d \times\YY}{(\forall y'\in\YY)\ L(x,y)\geq
L(x,y')}$. Then $\gra p$ is defined from a first-order formula and the
conclusion follows from~\cite[Theorem~2.6]{Cost99}.
\end{proofof}

\begin{proofof}{Proposition~\ref{p:gdiff}~\ref{pi:gdiff4}} Using Assumption~\ref{ass:psing} {\em H2} and~\ref{pi:gdiff3}, $p$ is continuous and semi-algebraic so using Proposition~\ref{p:phold}, $p$ is locally H\"older. Similarly Assumption~\ref{ass:psing}  {\em H2},~\ref{pi:gdiff2} and~\ref{pi:gdiff3} ensure that $\nabla g$ is also continuous ans semi-algebraic and the result follows again from Proposition~\ref{p:phold}.
\end{proofof}

\begin{proofof}{Proposition~\ref{p:gr}}
Let $n\in\NN$ and set $d_n=\|\nabla f(x_n)\|$. For the clarity of the
proof, the dependence of $\gamma_n$ in $x_n$ is dropped.

Lemma~\ref{l:desc} with $U = \RR^d$ provides
\begin{align}
f(x_{n+1})&\leq f(x_n)+ \scal{d_n}{x_{n+1}-x_n} +
\frac{\beta}{\nu+1}\|x_{n+1}-x_n\|^{\nu+1}\nonumber\\
&\leq f(x_n)- \frac{1}{\gamma_n}\|x_{n+1}-x_n\|^2
+\frac{\beta}{\nu+1}\|x_{n+1}-x_n\|^{\nu+1}\nonumber\\
&
\label{e:fxn2}
= f(x_n)-
\frac{1}{\gamma_n}\big(\|x_{n+1}-x_n\|^2-
\frac{\beta\gamma_n}{\nu+1}\|x_{n+1}-x_n\|^{\nu+1}\big).
\end{align}
By definition of $\gamma_n$ we have
\begin{equation*}
\gamma_n^{1/\nu} = \gamma \left(
\frac{\nu+1}{\beta}\right)^{1/\nu-1}
\|x_{n+1}-x_n\|^{1/\nu-1}
\end{equation*}
and thus
\begin{equation*}
\gamma_n = \gamma^\nu \left(
\frac{\nu+1}{\beta}\right)^{1-\nu}
\|x_{n+1}-x_n\|^{1-\nu}.
\end{equation*}
Set $\delta = 1-\gamma^\nu\left(\frac{\beta}{\nu+1}\right)^\nu$.
Since $\gamma< \frac{\nu+1}{\beta}$ by hypothesis in Algorithm~\ref{algo:0}, we
have $\delta>0$ and we deduce from \eqref{e:fxn2} that
\begin{align}
f(x_{n+1})&\leq f(x_n)-
\frac{1}{\gamma_n}\left(\|x_{n+1}-x_n\|^2
-\gamma^\nu\left(\frac{\beta}{\nu+1}\right)^\nu
\|x_{n+1}-x_n\|^{2}\right)\nonumber\\
&= f(x_n)-
\frac{1-\gamma^\nu\left(\frac{\beta}{\nu+1}\right)^\nu}{\gamma_n}
\|x_{n+1}-x_n\|^2\nonumber\\
&
\label{e:xn21}
= f(x_n)-\frac{\delta}{\gamma_n}
\|x_{n+1}-x_n\|^2.
\end{align}
Hence $(f(x_n))_{n\in\NN}$ is nonincreasing and since $(x_n)_{n\in\NN}$ is bounded, $(f(x_n))_{n\in\NN}$ is also bounded and converges, this proves~\ref{p:gro}.
Since, for all $n$, $\|\nabla f(x_n)\| = \|x_{n+1}-x_n\|/\gamma_n$ and
$(x_n)_{n\in\NN}$ bounded we can apply Theorem~\ref{t:tr} and obtain
that $x_n \to x^*\in\R^d $. Finally, it follows
from \eqref{e:xn21} that $\|x_{n+1}-x_n\|\to 0$ and that $\|x_{n+1}-x_n\|
= \gamma_n\|\nabla f(x_n)\| = \gamma \left(
\frac{\nu+1}{\beta}\right)^{1/\nu-1}\|\nabla f(x_n)\|^{1/\nu}\to 0$.
Hence $x^*$ is a critical point, which proves~\ref{p:gri}.

For $n$ fixed, we have
\begin{equation*}
\frac{\delta}{\gamma_n}\|x_{n+1}-x_n\|^2 =
\delta\gamma_n\|\nabla f(x_n)\|^2 =
\delta\gamma\left(\frac{\nu+1}{\beta}\right)^{ \frac{1}{\nu} -1}\|\nabla
f(x_n)\|^{ \frac{1}{\nu} +1}.
\end{equation*}
Then it follows from \eqref{e:xn21} that
\begin{equation}
\|\nabla f(x_n)\|^{ \frac{1}{\nu}+1}
\leq \frac{1}{\delta\gamma}
\left(\frac{\beta}{\nu+1}\right)^{\frac{1-\nu}{\nu}}
\left(f(x_n)-f(x_{n+1})\right)\label{p:appliqueLoja}
\end{equation}
whence
\begin{align*}
 & (n+1) \min_{k = 0,\ldots, n} \|\nabla f(x_k)\|^{\frac{1}{\nu}+1}\leq \Sum_{k=0}^n\|\nabla f(x_k)\|^{\frac{1}{\nu}+1}\\
 \leq\:\: &
\frac{1}{\gamma-\gamma^{\nu+1}\left(\frac{\beta}{\nu+1}\right)^\nu}
\left(\frac{\beta}{\nu+1}\right)^{\frac{1-\nu}{\nu}}
(f(x_0)-f(x^*)).
\end{align*}
Choosing $\gamma = \frac{\nu+1}{\beta}\left(\frac{1}{\nu+1}\right)^{1 / \nu}$, we obtain
\begin{align*}
    \gamma-\gamma^{\nu+1}\left(\frac{\beta}{\nu+1}\right)^\nu & =  \frac{\nu+1}{\beta}\left(\frac{1}{\nu+1}\right)^{1 / \nu} - \left( \frac{\nu+1}{\beta}\left(\frac{1}{\nu+1}\right)^{1 / \nu} \right)^{\nu+1}\left(\frac{\beta}{\nu+1}\right)^\nu \\
    &=\frac{\nu+1}{\beta} \left(\frac{1}{\nu+1}\right)^{1 / \nu} \left(1 -  \frac{1}{1 + \nu}\right)\\
    &=\frac{\nu+1}{\beta} \left(\frac{1}{\nu+1}\right)^{1 / \nu} \left(\frac{\nu}{1 + \nu}\right).
\end{align*}
from which we deduce
\begin{align*}
(n+1) \min_{0\leq k\leq n}
\|\nabla f(x_k)\|^{\frac{1}{\nu}+1}&\leq
\frac{(f(x) - f(x^*)) (\nu+1)}{\nu} \frac{\left(\frac{\beta}{\nu+1}\right)^{\frac{1-\nu}{\nu}}}{\frac{\nu+1}{\beta} \left(\frac{1}{\nu+1}\right)^{1 / \nu}}\\
&= (f(x) - f(x^*)) \beta^{1/\nu} \frac{\nu+1}{\nu},
\end{align*}
which proves~\ref{p:grii}.
\end{proofof}

\begin{proofof}{Theorem~\ref{t:lochold}}
\ref{t:lhe0} :
For every $n\in\NN$,
by using Taylor expansion, the test implies 
\begin{equation*}
(\forall \tilde{\gamma}\in\RPP)\quad -\tilde{\gamma}\|\nabla f(x_n)\|^2
+ o(\tilde{\gamma}) \leq -\delta\tilde{\gamma}\|\nabla f(x_n)\|^2,
\end{equation*}
thus 
\begin{equation*}
(\forall \tilde{\gamma}\in\RPP)\quad
\frac{o(\tilde{\gamma})}{\tilde{\gamma}}\leq(1-\delta)\|\nabla f(x_n)\|^2
\end{equation*}
which is satisfied for $\tilde{\gamma}$ small.\\
\ref{t:lheii0}:
 It follows from Algorithm~\ref{algo:1} that for every
$n\in\NN$,
\begin{equation}
\label{e:36}
f(x_{n+1})\leq f(x_n)- \frac{\delta}{\gamma_n}\|x_{n+1}-x_n\|^2
\end{equation}
so the descent property holds.

\ref{t:lheii}: One has $\|\nabla f(x_n)\|\leq \frac{1}{\gamma_n} \|x_{n+1}-x_n\|$. 
Since $(x_n)_{n\in\NN}$ is bounded, we
conclude by Theorem~\ref{t:tr}\ref{t:trii} that there exists $x^*\in\RR^d $ such that
$x_n\to x^*$. This follows directly from~\ref{t:lheii0} and \eqref{e:36}.

\ref{t:lheiii}:
Since $f$ is locally H\"older, there exist $U\subset \R^d $, a convex neighborhood of $x^*$,
$\nu\in\rzeroun$, and $\beta\in\RPP$ such that $\nabla f$ is
$(\beta,\nu)$ H\"older on $U$ and $(x_n)_{n\geq N}$ remains in $U$ for
 $N$ sufficiently large. 

Fix any $K \in \NN$ such that
\begin{equation}
\label{e:bounded}
K \geq \max\left\{\frac{\log \left(\frac{(1-\delta)(\nu+1)}{\gamma^\nu\beta}\right)}{ \log(\alpha)\nu},
\frac{1-\nu}{\rho\nu} \right\}
\end{equation}
then we also have
\begin{equation*}
\alpha^{K}\leq \frac{1}{\gamma}\left(\frac{(1-\delta)(\nu+1)}{\beta}\right)^{1/\nu}
\text{and}\quad
\rho K\geq \frac{1}{\nu} -1.
\end{equation*}

We deduce that for any $x \in U$ such that $x- \lambda \nabla f(x) \in U$,
\begin{align}
\label{e:37}
     \lambda^\nu : = \alpha^{K\nu}\min\{1,\|\nabla f(x)\|^{\rho K\nu}\}\gamma^\nu&\leq
    \left(\frac{(1-\delta)(\nu+1)}{\beta}\right)\min\{1,\|\nabla f(x)\|^{\rho K\nu}\}\nonumber \\
    &\leq
    \left(\frac{(1-\delta)(\nu+1)}{\beta}\right)\min\{1,\|\nabla f(x)\|^{1-\nu}\}.
\end{align}
We derive from Lemma~\ref{l:desc} and \eqref{e:37} that for any $x \in U$
\begin{align*}
f(x-\lambda\nabla f(x))&\leq f(x)- \lambda\|\nabla f(x)\|^2 +
\frac{\beta}{\nu+1}\lambda^{\nu+1}\|\nabla f(x)\|^{\nu+1}\\
&\leq f(x)- \lambda\|\nabla f(x)\|^2\nonumber\\
&\qquad +\frac{\beta\lambda }{\nu+1}\frac{(1-\delta)(\nu+1)}{\beta}
\min\{1,\|\nabla f(x)\|^{1-\nu}\}\|\nabla f(x)\|^{\nu+1}.
\end{align*}
Since $\min\{1,\|\nabla f(x)\|^{1-\nu}\}\|\nabla f(x)\|^{\nu+1}\leq \|\nabla f(x)\|^2$, we have for all $x \in U$ such that $x- \lambda \nabla f(x) \in U$,
\begin{align}
\label{e:38}
f(x-\lambda\nabla f(x))&\leq f(x)- \lambda\|\nabla f(x)\|^2+
(1-\delta)\lambda\|\nabla f(x)\|^{2}\nonumber\\
&= f(x)- \delta \lambda\|\nabla f(x)\|^2.
\end{align}
Fix any $N_0\in\NN$ large enough such that $x_N \in U$ for all $N \geq N_0$. Suppose that $K = k_{N_0}$ satisfies \eqref{e:bounded}, then for all $N \geq N_0$ we may consider equation \eqref{e:38} with $x = x_N$, $\lambda = \gamma_N$, noting that $x_{N+1} = x_N - \gamma_N \nabla f(x_N) \in U$. This is exactly the negation of the condition to enter the while loop of Algorithm~\ref{algo:1}. Hence, by a simple recursion, the algorithm never enters the while loop after step $N_0$ and we have $k_N = k_{N_0}$ for all $N \geq N_0$. On the other hand, if $k_{N_0}$ does not satisfy \eqref{e:bounded}, then since $k$ is incremented by $1$ at each execution of the while loop, using the fact that \eqref{e:bounded} implies \eqref{e:38}, it must hold that 
\begin{align*}
    k_N \leq 1+ \max\left\{\frac{\log \left(\frac{(1-\delta)(\nu+1)}{\gamma^\nu\beta}\right)}{ \log(\alpha)\nu},
\frac{1-\nu}{\rho\nu} \right\}.
\end{align*}
In all cases, we have using monotonicity of $k_N$ in $N$ that for all $N \in \NN$,
\begin{align}
    \label{eq:boundK}
    k_N \leq 1+\max\left\{k_{N_0}, \frac{\log \left(\frac{(1-\delta)(\nu+1)}{\gamma^\nu\beta}\right)}{ \log(\alpha)\nu}  ,
\frac{1-\nu}{\rho\nu}\right\},
\end{align}
hence $(k_n)_{n\in\NN}$ is bounded. 

Now we use \eqref{e:36} and~\ref{t:lheii0} which ensures that 
\begin{align*}
    \frac{\|x_{n+1} - x_n\|^2}{\gamma_n} = \gamma_n \|\nabla f(x_n)\|^2
\end{align*}
is summable and thus tends to $0$ as $n \to \infty$, whence, either $(\gamma_n)_{n \in \NN}$ or $(\|\nabla f(x_n)\|^2)_{n \in \NN}$ tends to zero. Using the fact that $(k_n)_{n\in\NN}$ is bounded, in any case we have, $\nabla f(x_n) \to 0$ as $n \to \infty$.

It follows for $n$ large enough that $ \|\nabla f(x_n)\|\leq 1$, from the while loop condition and the fact that $\bar{k} := \sup_{n \in \NN} k_n < + \infty$, that
\begin{equation*}
\delta\alpha^{\bar{k}}
\|\nabla f(x_n)\|^{2+\rho \bar{k}}\gamma\leq\delta\alpha^{k_n}
\|\nabla f(x_n)\|^{2+\rho k_n}\gamma=\delta\gamma_n(x_n)\|\nabla f(x_n)\|^2\leq f(x_n)-f(x_{n+1}).
\end{equation*}
Using the convergence of $(f(x_n))_{n\in\NN}$, and 
by summing the previous equation and taking the minimum, we obtain that $\min_{0\leq i\leq n} \|\nabla f (x_i)\|^{2 + \rho \bar{k}} = O(1/n)$.

\ref{t:lheiv}: The result follows from \eqref{eq:boundK} with $k_{N_0} = 0$, since in this case the same reasoning can be applied for all $N \in \NN$ with $U = \RR^d$.
\end{proofof}

\begin{proofof}{Theorem~\ref{t:minmaxbt}}
Recall that $g(\cdot) = \max_{y\in\YY} L(\cdot,y)$ and $p(\cdot) = \argmax_{y\in\YY}
L(\cdot,y)$.
It follows from Proposition~\ref{p:gdiff} that for every $n\in\NN$,
$\nabla_x L(x_n,y_n) = \nabla g(x_n)$.  We derive from Proposition~\ref{pi:gdiff4} that $\nabla g$ is locally H\"older. 
It turns out that Algorithm~\ref{algo:minmax} applied to $L$ is the same as Algorithm~\ref{algo:1} applied to $g$.
Thus Theorem~\ref{t:lochold} ensures the convergence of
$(x_n)_{n\in\NN}$ to a critical point $x^*\in\R^d $ of $g$. Furthermore,
it follows from the continuity of $p$ that $y_n\to y^* = p(x^*)$.
We conclude that $(x_n,y_n)_{n\in\NN}$ converges to a critical point of
$L$, satisfying $y^*=\argmax_{y\in\YY} L(x^*,y)$.
Finally, since for every $n\in\NN$, $\nabla g(x_n)=\nabla_x
L(x_n,y_n)$, we conclude by Theorem~\ref{t:lochold}\ref{t:lheiv}.
\end{proofof}

\section{Lemmas}

\begin{proposition}[Continuity and semi-algebraicity implies H\"older continuity]\cite{bochnak1987geometrie}
\label{p:phold}
Let $f\colon\R^d \to\R^{d'}$ be a semi-algebraic continuous function, then
$f$ is locally H\"older, i.e.,  for all compact set $K$,
\begin{equation*}
\exi \beta\in\RPP, \exi \nu\in\rzeroun, \forall x,y\in K, 
\quad \|f(x)-f(y)\|\leq \beta\|x-y\|^\nu.
\end{equation*}
\end{proposition}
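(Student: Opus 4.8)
The plan is to reduce the multivariate statement to a one–variable question about the modulus of continuity of $f$ on $K$, and then to exploit the rigid behaviour of semi-algebraic functions of a single real variable near the origin. Concretely, I would introduce the modulus
\[
\omega(t)=\sup\menge{\norm{f(x)-f(y)}}{x,y\in K,\ \norm{x-y}\leq t},\qquad t\geq 0 .
\]
Since $K$ is compact, $f$ is uniformly continuous on $K$, so $\omega(0)=0$ and $\omega(t)\to 0$ as $t\to 0^+$; moreover $\omega$ is nondecreasing and bounded by $2\max_{K}\norm{f}$. The Hölder estimate we want is exactly a power-law upper bound on $\omega$ near $0$.

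First I would check that $\omega$ is itself semi-algebraic. Its graph is described by the first–order formula stating that $s=\omega(t)$ iff every pair $x,y\in K$ with $\norm{x-y}\leq t$ satisfies $\norm{f(x)-f(y)}\leq s$, and $s$ is the least such value. All the atoms here are semi-algebraic: $K$ is semi-algebraic by hypothesis, the graph of $f$ is semi-algebraic, and the Euclidean norms are semi-algebraic. Hence, by the Tarski–Seidenberg principle (quantifier elimination), the graph of $\omega$ is a semi-algebraic subset of $\R^2$, so $\omega\colon\RP\to\RP$ is a continuous semi-algebraic function of one variable.

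Next I would invoke the structure theorem for one–variable semi-algebraic functions near a point (Puiseux expansion / the monotonicity theorem, see \cite{bochnak1987geometrie}). Either $\omega$ vanishes on a right neighbourhood of $0$, in which case $f$ is locally constant on $K$ and the conclusion is immediate; or $\omega$ admits a Puiseux expansion $\omega(t)=a\,t^{\nu}+o(t^{\nu})$ as $t\to 0^+$ with $a>0$ and a positive rational exponent $\nu$. In the latter case there exist $\beta_0>0$ and $\varepsilon>0$ with $\omega(t)\leq\beta_0 t^{\nu}$ for $0\leq t\leq\varepsilon$, which reads $\norm{f(x)-f(y)}\leq\beta_0\norm{x-y}^{\nu}$ whenever $\norm{x-y}\leq\varepsilon$.

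Finally I would globalise over $K$ and normalise the exponent. Writing $D$ for the diameter of $K$ and $M=2\max_{K}\norm{f}$, for pairs with $\norm{x-y}>\varepsilon$ one has $\norm{f(x)-f(y)}\leq M\leq (M/\varepsilon^{\nu})\norm{x-y}^{\nu}$, so $\norm{f(x)-f(y)}\leq\beta\norm{x-y}^{\nu}$ on all of $K$ with $\beta=\max\{\beta_0,M/\varepsilon^{\nu}\}$. If the exponent produced is $\nu>1$, I would lower it to $1$ using $\norm{x-y}^{\nu}\leq D^{\nu-1}\norm{x-y}$ (valid since $\norm{x-y}\leq D$), so that in all cases $\nu\in\rzeroun$ can be arranged. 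The main obstacle is the one–variable extraction of a genuine power law near $0$: this is where semi-algebraicity is essential, as it rules out pathological moduli and forces the Puiseux/\L ojasiewicz power behaviour; the only other point requiring care is verifying rigorously that the supremum defining $\omega$ yields a semi-algebraic graph under Tarski–Seidenberg.
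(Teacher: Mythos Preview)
The paper does not actually prove this proposition; it is quoted as a known result from \cite{bochnak1987geometrie} and used as a black box. Your argument is a valid, self-contained derivation (essentially the classical one via the one-variable Puiseux theorem), so in that sense you go further than the paper.

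There is, however, one genuine slip. You write that ``$K$ is semi-algebraic by hypothesis'', but the statement asks for the estimate on an \emph{arbitrary} compact set $K$, which need not be semi-algebraic. Without semi-algebraicity of $K$, the first-order formula defining $\omega$ is not over a semi-algebraic domain and your Tarski--Seidenberg step for the graph of $\omega$ does not go through as written. The fix is immediate: enclose $K$ in a closed Euclidean ball $B\supset K$ (which \emph{is} semi-algebraic), run your entire argument on $B$ to obtain $\|f(x)-f(y)\|\leq\beta\|x-y\|^{\nu}$ for all $x,y\in B$, and then restrict to $K$. With that adjustment, the remaining steps---definability of $\omega$ by quantifier elimination, the Puiseux dichotomy at $0$ forcing $\omega(t)\leq\beta_0 t^{\nu}$ for small $t$, the globalisation over pairs with $\|x-y\|>\varepsilon$, and the normalisation $\nu\leq 1$ via $\|x-y\|^{\nu}\leq D^{\nu-1}\|x-y\|$---are all correct.
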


We recall below the \L ojasiewicz inequality, see e.g  \cite{kurdyka1998gradients} and references therein. 

\begin{definition}[\L ojasiewicz inequality]
\label{def:kl}
A differentiable function $f\colon\RR^n\to\RR$ has the \L ojasiewicz
property at $x^*\in\RR^n$ if there exist $\eta,C\in\RPP$
and $\theta\in\zeroun$ such that
for all $x\in B(x^*,\eta)$,
the following inequality is satisfied
\begin{equation*}
C|f(x)-f(x^*)|^{\theta}\leq\|\nabla f(x)\|.
\end{equation*}
In this case the set $B(x^*,\eta)$ is called a \L ojasiewicz ball.
\end{definition}

\begin{lemma}[H\"older Descent Lemma]
\cite[Lemma~1]{Yash16}
\label{l:desc}
Let $U\subset X$ be a nonempty convex set, let $f\colon\R^d \to\RR$ be a $C^1$ function, let $\nu\in]0,1]$, and let
$\beta\in\RPP$. Suppose that
\begin{equation*}
    \forall (x,y)\in U^2,\quad
    \|\nabla f(x)- \nabla f(y)\|\leq \beta\|x-y\|^\nu.
\end{equation*}
Then
\begin{equation*}
  \forall (x,y)\in U^2,\quad
f(x)\leq f(y) + \scal{\nabla f(x)}{x-y}
+ \frac{\beta}{\nu+1}\|y-x\|^{\nu+1}.
\end{equation*}
\end{lemma}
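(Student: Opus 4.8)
The plan is to prove the inequality by integrating the gradient of $f$ along the line segment joining $y$ to $x$, using the convexity of $U$ to guarantee that the Hölder hypothesis is valid at every point of this segment. First I would fix $(x,y)\in U^2$ and observe that convexity of $U$ ensures $y+t(x-y)\in U$ for all $t\in[0,1]$, so that the assumed bound $\|\nabla f(u)-\nabla f(v)\|\leq\beta\|u-v\|^\nu$ applies to any pair of points on that segment.

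Next I would introduce the scalar auxiliary function $\phi\colon[0,1]\to\RR$ defined by $\phi(t)=f(y+t(x-y))$. Since $f$ is $C^1$, $\phi$ is continuously differentiable with $\phi'(t)=\scal{\nabla f(y+t(x-y))}{x-y}$, and the fundamental theorem of calculus yields $f(x)-f(y)=\phi(1)-\phi(0)=\int_0^1\scal{\nabla f(y+t(x-y))}{x-y}\,dt$. Subtracting the constant-in-$t$ term $\scal{\nabla f(x)}{x-y}=\int_0^1\scal{\nabla f(x)}{x-y}\,dt$, I would obtain the identity
\[
f(x)-f(y)-\scal{\nabla f(x)}{x-y}=\int_0^1\scal{\nabla f(y+t(x-y))-\nabla f(x)}{x-y}\,dt.
\]

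The crux is then to estimate the integrand. I would bound it by its absolute value, apply Cauchy--Schwarz, and then invoke the Hölder hypothesis. The point to watch is that the gradient is anchored at $x$ while the segment is parametrized from $y$, so the relevant distance is $\|(y+t(x-y))-x\|=(1-t)\|x-y\|$; this produces the pointwise bound $\beta(1-t)^\nu\|x-y\|^{\nu+1}$. Integrating and using $\int_0^1(1-t)^\nu\,dt=\tfrac{1}{\nu+1}$ (via the substitution $s=1-t$) delivers exactly $\tfrac{\beta}{\nu+1}\|y-x\|^{\nu+1}$, which completes the argument.

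I do not expect a genuine obstacle here, as this is the classical descent-lemma computation transported from a Lipschitz to a Hölder modulus of continuity. The only step requiring care is that the gradient is evaluated at $x$ rather than at $y$, which replaces the usual factor $t^\nu$ in the integrand by $(1-t)^\nu$; since both integrate to $\tfrac{1}{\nu+1}$, the stated constant is recovered unchanged.
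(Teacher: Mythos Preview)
Your argument is correct and is exactly the standard proof of the descent lemma with a H\"older modulus: integrate the gradient along the segment from $y$ to $x$, subtract the constant term $\scal{\nabla f(x)}{x-y}$, apply Cauchy--Schwarz and the H\"older bound, and compute $\int_0^1 (1-t)^\nu\,dt = \tfrac{1}{\nu+1}$. The paper does not actually supply its own proof of this lemma---it merely cites \cite[Lemma~1]{Yash16}---so there is no alternative approach to compare against; your proof is precisely what that reference contains.
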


\begin{lemma}[Controlled descent]
\label{l:le1}
Let $\delta,\theta\in\zeroun$, let $C\in\RPP$
and let $x,y,x^*\in\R^d $
Suppose that the following hold:
\begin{enumerate}[label=\rm(\alph*)]
\item $f(x)\geq f(x^*)$ et $f(y)\geq f(x^*)$,
\item
$f(y)\leq f(x)-\frac{\delta}{\gamma} \|y-x\|^{2}$,
\item $\|\nabla f(x)\|\leq \frac{1}{\gamma} \|y-x\|$,
\item
$C(f(x)-f(x^*))^\theta\leq\|\nabla f(x)\|$.
\end{enumerate}
Then
\begin{equation*}
\delta C(1-\theta)\|y-x\|\leq (f(x)-f(x^*))^{1-\theta}
-(f(y)-f(x^*))^{1-\theta}.
\end{equation*}
\end{lemma}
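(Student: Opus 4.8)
\textbf{Proof plan for Lemma~\ref{l:le1}.}

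The plan is to start from the \L ojasiewicz-type inequality (d) and combine it with the sufficient-decrease property (b) to produce a telescoping bound. First I would apply the concavity of the function $t\mapsto t^{1-\theta}$ on $\RP$ (valid since $\theta\in\zeroun$ gives $1-\theta\in\zeroun$). Setting $a=f(x)-f(x^*)$ and $b=f(y)-f(x^*)$, both nonnegative by (a), concavity yields the standard gradient-type inequality
\begin{equation*}
a^{1-\theta}-b^{1-\theta}\geq (1-\theta)\,a^{-\theta}\,(a-b),
\end{equation*}
which holds when $a>0$ (the degenerate case $a=0$ forces $\nabla f(x)=0$ through (d) and hence $y=x$ through (c) and (b), so the claimed inequality is trivial there).

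Next I would lower-bound the right-hand side. From (b) we have $a-b=f(x)-f(y)\geq \frac{\delta}{\gamma}\|y-x\|^2$, and from (d) we have $a^\theta\leq \|\nabla f(x)\|^{1/1}\cdot$, more precisely $a^{-\theta}\geq C/\|\nabla f(x)\|$ after rewriting $C a^\theta\leq\|\nabla f(x)\|$ as $a^{-\theta}\geq C\|\nabla f(x)\|^{-1}$. Substituting both estimates gives
\begin{equation*}
a^{1-\theta}-b^{1-\theta}\geq (1-\theta)\,\frac{C}{\|\nabla f(x)\|}\,\frac{\delta}{\gamma}\|y-x\|^2.
\end{equation*}
Finally I would eliminate the stray gradient norm using (c), namely $\|\nabla f(x)\|\leq\frac{1}{\gamma}\|y-x\|$, so that $\frac{1}{\|\nabla f(x)\|}\geq\frac{\gamma}{\|y-x\|}$. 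This cancels one factor of $\gamma$ and one power of $\|y-x\|$, leaving exactly
\begin{equation*}
a^{1-\theta}-b^{1-\theta}\geq \delta C(1-\theta)\|y-x\|,
\end{equation*}
which is the assertion.

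The main obstacle, though minor, is orienting the inequalities correctly: one must pair $a^{-\theta}$ with the upper bound on $\|\nabla f(x)\|$ from (c), not the \L ojasiewicz bound, so that the direction of the estimate is preserved; using (d) and (c) in the wrong roles reverses a sign. I would also take care with the edge case $\nabla f(x)=0$ (equivalently $a=0$) separately, since the division by $\|\nabla f(x)\|$ is then invalid, and verify that concavity is applied at the correct base point $a$ rather than $b$ so the tangent-line bound points the right way.
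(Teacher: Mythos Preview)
Your proof is correct and follows essentially the same route as the paper: concavity of $s\mapsto s^{1-\theta}$ at $a=f(x)-f(x^*)$, then successive substitution of (b), (d), and (c) to cancel the $\gamma$ and one power of $\|y-x\|$. One small slip in the edge-case discussion: $a=0$ does \emph{not} force $\nabla f(x)=0$ via (d) (the implication there goes the other way); rather, $a=0$ together with (a) and (b) gives $f(x^*)\leq f(y)\leq f(x)-\tfrac{\delta}{\gamma}\|y-x\|^2=f(x^*)-\tfrac{\delta}{\gamma}\|y-x\|^2$, hence $y=x$ directly, which is how the paper disposes of that case.
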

\begin{proof}
First,  if $y=x$, then the inequality holds trivially. Second, if $f(x) = f(x^*)$, then by the first two items, $y = x$ and the inequality holds also. Second, also. Hence we may suppose that $f(x) - f(x^*) > 0$ and $y \neq x$.
We have
\begin{equation*}
\frac{C\gamma}{\|y-x\|} \leq \frac{C}{\|\nabla f(x)\|}
\leq (f(x)-f(x^*))^{-\theta}.
\end{equation*}
By concavity of $s\mapsto s^{1-\theta}$, we have
\begin{align*}
(f(x)-f(x^*))^{1-\theta}-(f(y)-f(x^*))^{1-\theta}&\geq
(1-\theta)(f(x)-f(x^*))^{-\theta}(f(x)-f(y))\\
&\geq \frac{C\gamma(1-\theta)}{\|y-x\|}
\frac{\delta}{\gamma}\|y-x\|^{2}\\
&\geq \delta C(1-\theta) \|y-x\|,
\end{align*}
which concludes the proof.
\end{proof}

We slightly adapt the recipe from \cite{Bolt14a} and add a trap argument from \cite{Atto13}.
\begin{theorem}[Recipe for convergence \cite{Bolt14a} and the trapping phenomenon]
\label{t:tr}
Let $f\colon\R^d \to\RR$ be a $C^1$ function and let $\delta\in\zeroun$.
Consider $(x_n)_{n\in\NN}$ in $\R^d $
and $(\gamma_n)_{n\in\NN}$ in $\RPP$ that satisfies
\begin{enumerate}[label={\rm[\alph*]}]
\item
\label{t:tra}
$(\forall n\in\NN)$ $f(x_{n+1})
\leq f(x_n)- \dfrac{\delta}{\gamma_n} \|x_{n+1}-x_n\|^2$,
\item
\label{t:trb}
$(\forall n\in\NN)$
$\|\nabla f(x_n)\|\leq \dfrac{1}{\gamma_{n}}\|x_{n+1}-x_n\|$.
\end{enumerate}
Then the following results hold:
\begin{enumerate}
\item
\label{t:tri}
Assume that there exist $x^*\in\R^d$, $\theta\in ]0,1[$, $\eta,C\in\RPP$, and $N\in \NN$ such that
\begin{align*}
    & C|f(x)-f(x^*)|^\theta\leq \|\nabla f(x)\|, \forall x\in B(x^*,\eta),\\
 & x_N\in B(x^*,\eta/2),\\
& |f(x_N)-f(x^*)|^{1-\theta} <\delta C(1-\theta)\eta/2.
\end{align*}
If $f(x_n)\geq f(x^*)$ for all $n\in\NN$, then $(x_n)_{n\geq N}$ lies entirely in $B(x^*,\eta)$.
\item
\label{t:trii}
Suppose that $f$ is semi-algebraic. Then if $(x_n)_{n\in\NN}$ has a
cluster point $x^*\in\RR^d$, then it converges to $x^*$.
\end{enumerate}
\end{theorem}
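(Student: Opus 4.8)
The plan is to prove the two items separately, using item~\ref{t:tri} (the trapping phenomenon) as the engine that drives item~\ref{t:trii} (convergence). The common workhorse is Lemma~\ref{l:le1}: whenever $x_n$ lies in a \L ojasiewicz ball $B(x^*,\eta)$, hypotheses~\ref{t:tra} and~\ref{t:trb} supply conditions (b) and (c) of that lemma with $x=x_n$, $y=x_{n+1}$, $\gamma=\gamma_n$, condition (a) is the sign hypothesis $f(\cdot)\geq f(x^*)$, and condition (d) is the \L ojasiewicz inequality evaluated at $x_n$. I would therefore set $\phi_n=(f(x_n)-f(x^*))^{1-\theta}$ and record the one-step estimate
\begin{equation*}
\delta C(1-\theta)\,\|x_{n+1}-x_n\|\leq \phi_n-\phi_{n+1},
\end{equation*}
valid as soon as $x_n\in B(x^*,\eta)$. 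The crucial observation is that this estimate requires only $x_n$ (and not $x_{n+1}$) to sit in the ball, which is exactly what makes the ensuing induction self-propagating.

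For item~\ref{t:tri} I would prove by induction that $x_n\in B(x^*,\eta)$ for every $n\geq N$. The base case is the hypothesis $x_N\in B(x^*,\eta/2)$. Assuming $x_N,\dots,x_{N+k}\in B(x^*,\eta)$, I sum the one-step estimate over $j=N,\dots,N+k$ and telescope to obtain $\delta C(1-\theta)\sum_{j=N}^{N+k}\|x_{j+1}-x_j\|\leq \phi_N-\phi_{N+k+1}\leq\phi_N$, using $\phi_{N+k+1}\geq 0$. Combining this with the triangle inequality and the trap condition $\phi_N=|f(x_N)-f(x^*)|^{1-\theta}<\delta C(1-\theta)\eta/2$ gives
\begin{equation*}
\|x_{N+k+1}-x^*\|\leq\|x_N-x^*\|+\sum_{j=N}^{N+k}\|x_{j+1}-x_j\|<\frac{\eta}{2}+\frac{\eta}{2}=\eta,
\end{equation*}
which closes the induction and keeps the whole tail inside $B(x^*,\eta)$.

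For item~\ref{t:trii} the first step is to place the cluster point in the setting of item~\ref{t:tri}. Hypothesis~\ref{t:tra} makes $(f(x_n))_{n\in\NN}$ nonincreasing; since a subsequence $x_{n_k}\to x^*$ and $f$ is continuous, $f(x_{n_k})\to f(x^*)$, so the monotone sequence $(f(x_n))_{n\in\NN}$ converges to $f(x^*)$, and in particular $f(x_n)\geq f(x^*)$ for all $n$. Because $f$ is semi-algebraic it satisfies the \L ojasiewicz inequality at $x^*$ (Definition~\ref{def:kl}; see \cite{kurdyka1998gradients}), fixing $\eta,C,\theta$. I would then choose $N=n_k$ large enough that simultaneously $x_N\in B(x^*,\eta/2)$ (from $x_{n_k}\to x^*$) and $|f(x_N)-f(x^*)|^{1-\theta}<\delta C(1-\theta)\eta/2$ (from $f(x_n)\to f(x^*)$). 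Item~\ref{t:tri} then traps $(x_n)_{n\geq N}$ in $B(x^*,\eta)$, so the one-step estimate holds for every $n\geq N$; summing it to infinity gives $\sum_{n\geq N}\|x_{n+1}-x_n\|\leq\phi_N/(\delta C(1-\theta))<\infty$. Hence $(x_n)_{n\in\NN}$ has finite length, is Cauchy, and converges; as $x^*$ is one of its cluster points, the limit must be $x^*$.

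The telescoping and triangle-inequality bookkeeping are routine. The one genuinely delicate point is the logical structure of the trapping induction in item~\ref{t:tri}: one must verify that Lemma~\ref{l:le1} is applicable at step $n$ using only $x_n\in B(x^*,\eta)$, so that deducing $x_{N+k+1}\in B(x^*,\eta)$ from the bound on the accumulated path length genuinely advances the induction, without a circular appeal to the next iterate already lying in the ball.
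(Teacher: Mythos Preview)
Your proposal is correct and follows essentially the same approach as the paper's own proof: both use Lemma~\ref{l:le1} to obtain the one-step estimate $\delta C(1-\theta)\|x_{n+1}-x_n\|\leq (f(x_n)-f(x^*))^{1-\theta}-(f(x_{n+1})-f(x^*))^{1-\theta}$ whenever $x_n\in B(x^*,\eta)$, run a strong induction summing these increments to trap the tail in $B(x^*,\eta)$, and then in~\ref{t:trii} invoke the \L ojasiewicz property of semi-algebraic functions, choose $N$ along the subsequence so that the trap conditions of~\ref{t:tri} are met, and conclude finite length and convergence from the summability of the increments. Your explicit remark that Lemma~\ref{l:le1} only needs $x_n$ (not $x_{n+1}$) in the ball is exactly the observation that makes the induction go through and mirrors the paper's implicit use of the same fact.
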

\begin{proof}

\ref{t:tri}:
By assumption, we have 
$x_N\in B(x^*,\eta/2)$ and
\begin{equation*}
|f(x_N)-f(x^*)|^{1-\theta} <\delta C(1-\theta)\eta/2.
\end{equation*}
It follows from Lemma~\ref{l:le1} that
\begin{equation*}
\delta C(1-\theta)\|x_{N+1}-x_N\|\leq
(f(x_N)-f(x^*))^{1-\theta}-(f(x_{N+1})-f(x^*))^{1-\theta}.
\end{equation*}
Let us prove by strong induction that for every $n\geq N$, $x_n\in
B(x^*,\eta)$. Assume $n\geq N+1$ and suppose that for every integer
$N\leq k\leq n-1$, $x_k\in B(x^*,\eta)$. Lemma~\ref{l:le1} yields
\begin{equation*}
(\forall k\in[N,\ldots, n-1])\quad
\delta C(1-\theta)\|x_{k+1}-x_k\|\leq
(f(x_k)-f(x^*))^{1-\theta}-(f(x_{k+1})-f(x^*))^{1-\theta}.
\end{equation*}
By summing we have
\begin{align}
\delta C(1-\theta)\Sum_{k=N}^{n-1}\|x_{k+1}-x_k\| &\leq (f(x_N)-f(x^*))^{1-\theta}
-(f(x_{n})-f(x^*))^{1-\theta}\nonumber\\
&\leq (f(x_N)-f(x^*))^{1-\theta}<\delta C(1-\theta)\eta/2.
\label{eq:summabilityIncrements}
\end{align}
Since
\begin{align*}
\|x_{n}-x^*\|&\leq
\Sum_{k=N}^{n-1}\|x_{k+1}-x_{k}\|+\|x_{N}-x^*\|< \eta/2+\eta/2= \eta
\end{align*}
we have $x_n\in B(x^*,\eta)$.
We have proved that for every $n\geq N$, $x_n \in B(x^*,\eta)$.

\ref{t:trii}: Since $(f(x_n))_{n\in\NN} $ is nonincreasing by~\ref{t:tra}, we deduce
that $f(x_n)\to f(x^*)$ and $f(x_n)\geq f(x^*)$. Since $f$ is
semi-algebraic, $f$ has the \L ojasiewicz property at $x^*$ \cite{kurdyka1998gradients}. Hence, let
us define $\theta$, $C$, and $\eta$ as in Definition~\ref{def:kl} relative to $x^*$. Since $x^*$ is a cluster point and $f(x_n)\to f(x^*)$, there exists $N$ as in \ref{t:tri} above. 

For every integer $n\geq N$,
it follows from \eqref{eq:summabilityIncrements} that 
\begin{equation*}
\delta C(1-\theta)\Sum_{k=N}^{n-1}\|x_{k+1}-x_k\|\leq
(f(x_N)-f(x^*))^{1-\theta}
\leq (f(x_0)-f(x^*))^{1-\theta}<\pinf
\end{equation*}
hence the serie converges, increments are summable and $(x_k)_{k\in\NN}$ converges to $x^*$.
\end{proof}

\begin{remark}[Convergence and semi-algebraicity] (a) Note that when $f$ is semi-algebraic, we have in fact an alternative, for any sequence:
\begin{itemize}
    \item either $\|x_n\|\to\infty$
    \item or $(x_n)_{n\in\NN}$ converges to a critical point $x^*$.
\end{itemize}
Indeed if we are not in the diverging case, there is a cluster point $x^*$ which must be a critical point. Whence we are in the situation of (ii) above.\\
(b) If $f$ is, in addition, coercive, i.e., $\lim_{\|x\|\to+\infty} f(x)=+\infty$, each H\"older gradient sequence converges to a critical point since the first alternative is not possible because $(f(x_k))_{k \in \NN}$ is non increasing so that $(x_k)_{k \in \NN}$ is bounded.
\end{remark}

\section{Numerical Experiments : Complements}
\label{app:D}

In practice, it can be difficult to calculate the
argmax (or the argmin) or to perform rigorously the internal while loop, we propose two 
algorithms to simplify this  implementation aspect. We also present the constant step size algorithm that we use to assess the efficiency of our method.

\subsection{Sinkhorn GAN}
Sinkhorn GAN is a min-min problem, thus our model must  be slightly adapted.  First, we start with Algorithm~\ref{algo:minmin_const} below which is a constant step size algorithm. Due to the specific setting of Sinkhorn problem, the argmin may be computed exactly.
\begin{algorithm}[!h]
\label{algo:minmin_const}
\SetKwInOut{Init}{Initialization}
\KwIn{$\gamma\in\RPP$}
\Init{$x_0\in\R^d $}
\For{$n=0,1,\ldots$}{
$y_n = \argmin_{y\in\YY} L(x_n,y)$\\
$x_{n+1} = x_n - \gamma \nabla_x L(x_n,y_n)$
}
\caption{Constant step size gradient method for min-max}
\end{algorithm}
The next algorithm is a  Backtrack H\"older method for the min-min problem. For gaining efficiency, we introduce a new rule in Algorithm~\ref{algosink}, which maintains the sufficient decrease property, without the monotonicity of $(k_n)_{n\in\NN}$.
\begin{algorithm}[!h]
\label{algosink}
\SetKwInOut{Init}{Initialization}
\KwIn{$N\in\NN$,$\gamma,\rho\in\RPP$, and $\alpha,\delta,\delta_+\in\zeroun$}
\Init{$k_{-1}=1$, $n=0$, and $x_0\in\R^d $}
\While{$n <N$}{
$k = k_{n-1}$ \\
$n=n+1$\\ 
$\gamma_n(x_n) = \alpha^k\min\{1,\|\nabla L(x_n,y_n)\|^{\rho k}\}\gamma$\\ 
\If{$\min_y L(x_n-\gamma_n(x_n)\nabla f(x_n),y)
< \min_y L(x_n,y)
- \delta^+\gamma_n(x_n)\|\nabla f(x_n)\|^2$}
{
 $k=k-1$ 
}
\While{
$\min_y L(x_n-\gamma_n(x_n) \nabla f(x_n),y)> \min_y L(x_n,y)-\delta\gamma_n(x_n)\|\nabla L(x_n,y_n)\|^2$
}
{
$k = k+1$\\ 
$n=n+1$\\ 
$\gamma_n(x_n) = \alpha^k\min\{1,\|\nabla L(x_n,y_n)\|^{\rho k}\}\gamma$\\ 
}
$k_n = k$\\
$x_{n+1} = x_n - \gamma_n(x_n) \nabla L(x_n,y_n)$
}
\caption{Non Monotone Backtrack H\"older for min-max}
\end{algorithm}

We also present an Armijo search process for this problem in Algorithm~\ref{algo:minmin_armijo}. 
It has a structure similar to the ``Non Monotone H\"older Backtrack" but with a much less clever  update for $\gamma_n$.
\begin{algorithm}
\label{algo:minmin_armijo}
\SetKwInOut{Init}{Initialization}
\KwIn{$N\in\NN$,$\gamma,\rho\in\RPP$, and $\alpha,\delta,\delta_+\in\zeroun$}
\Init{$k_{-1}=1$, $n=0$, and $x_0\in\R^d $}
\While{$i <N$}{
$k = k_{n-1}$ \\
$n=n+1$\\ 
$\gamma_n(x_n) = \alpha^k\min\{1,\|\nabla L(x_n,y_n)\|^{\rho k}\}\gamma$\\ 
\If{$\min_y L(x_n-\gamma_n(x_n)\nabla f(x_n),y)
< \min_y L(x_n,y)
- \delta^+\gamma_n(x_n)\|\nabla f(x_n)\|^2$}
{
 $k=k-1$ 
}
\While{
$\min_y L(x_n-\gamma_n(x_n) \nabla f(x_n),y)> \min_y L(x_n,y)-\delta\gamma_n(x_n)\|\nabla L(x_n,y_n)\|^2$
}
{
$k = k+1$\\ 
$n=n+1$\\ 
$\gamma_n(x_n) = \alpha^k\min\{1,\|\nabla L(x_n,y_n)\|^{\rho k}\}\gamma$\\ 
}
$k_n = k$\\
$x_{n+1} = x_n - \gamma_n(x_n) \nabla L(x_n,y_n)$
}
\caption{Non Monotone Armijo for min-max}
\end{algorithm}

\newpage
\subsection{Wasserstein GAN}

As explained in  Section~\ref{sec:wgan}, this problem does not formally match our setting. In particular, the argmax cannot be computed fast, so we  use a gradient ascent to provide an approximation expressed by using the sign $\approx$. We also provide a constant step  size method (Algorithm~\ref{algo:minmax_const}) to benchmark our algorithm.
\begin{algorithm}
\label{algo:minmax_const}
\SetKwInOut{Init}{Initialization}
\KwIn{$\gamma\in\RPP$}
\Init{$x_0\in\R^d $}
\For{$n=0,1,\ldots$}{
$y_n \approx \argmax_{y\in\YY} L(x_n,y)$\\
$x_{n+1} = x_n - \gamma \nabla_x L(x_n,y_n)$
}
\caption{Heuristic gradient method for min-max with constant step size}
\end{algorithm}

Besides, since the max is not easily accessible,  we modify the while loop by using $y_n$ instead of the exact argmax to validate the sufficient decrease. This approach gives  Algorithm~\ref{algo:minmax_heur1}.
\begin{algorithm}
\label{algo:minmax_heur1}
\SetKwInOut{Init}{Initialization}
\KwIn{$\gamma,\rho\in\RPP$ and $\delta,\alpha\in\zeroun$}
\Init{$x_0\in\R^d $}
\For{$n=0,1,\ldots$}{
$y_n \approx \argmax_{y\in\YY} L(x_n,y)$\\
$k = 0$\\
$\gamma_n(x_n) = \gamma$\\
\While{
$ L(x_n-\gamma_n(x_n) \nabla_x L(x_n,y_n),y_n)> L(x_n,y_n)-
\delta\gamma_n(x_n)\|\nabla_x L(x_n,y_n)\|^2$}
{
$k = k+1$\\
$\gamma_n(x_n)=\gamma\alpha^{k}\min\{1,\|\nabla_x L(x_n,y_n)\|^{k\rho}\}$\\
}
$k_n = k$\\
$x_{n+1} = x_n - \gamma_n(x_n) \nabla_x L(x_n,y_n)$
}
\caption{Heuristic H\"older Backtrack for min-max}
\end{algorithm}

\end{document}